\documentclass[11pt]{article}

\usepackage[utf8]{inputenc}
\usepackage[T1]{fontenc}
\usepackage[margin=1in]{geometry}
\usepackage{hyperref}
\usepackage{url}
\usepackage{booktabs}
\usepackage{amsfonts}
\usepackage{nicefrac}
\usepackage{microtype}
\usepackage[table]{xcolor}

\usepackage{graphicx}
\usepackage{amsthm}

\newtheorem{theorem}{Theorem}
\usepackage{amsmath}
\usepackage{cleveref}

\usepackage{bm}
\usepackage{algorithm}
\usepackage{algorithmic}

\usepackage{enumerate}
\usepackage{enumitem}
\usepackage{multirow}
\usepackage{multicol}
\usepackage{makecell}
\usepackage{subcaption}
\usepackage{float}
\usepackage{siunitx}

\newcommand{\best}[1]{\textbf{#1}}

\DeclareMathOperator*{\argmax}{arg\,max}
\DeclareMathOperator*{\argmin}{arg\,min}

\def\Acal{{\mathcal{A}}}
\def\Bcal{{\mathcal{B}}}
\def\Fcal{{\mathcal{F}}}
\def\Ical{{\mathcal{I}}}
\def\Lcal{{\mathcal{L}}}
\def\Ncal{{\mathcal{N}}}
\def\Ocal{{\mathcal{O}}}
\def\Pcal{{\mathcal{P}}}
\def\Scal{{\mathcal{S}}}

\def\Ibb{{\mathbb{I}}}
\def\Rbb{{\mathbb{R}}}

\def\xbm{{\bm{x}}}
\def\ybm{{\bm{y}}}
\def\mubm{{\bm{\mu}}}

\def\Xbf{{\mathbf{X}}}

\title{DICS: Data-Informed Centroid Splitting for Decision Tree Classifiers}

\author{
  MD Saifur Rahman Mazumder\thanks{\texttt{msmazumder@miners.utep.edu}}
  \quad Feng Yu\thanks{Corresponding author. \texttt{fyu@utep.edu}} \\[4pt]
  Department of Mathematical Sciences\\
  University of Texas at El Paso
}
\date{}

\begin{document}

\maketitle

\begin{abstract}
Decision tree–based models are widely used in machine learning due to their interpretability and strong empirical performance. However, training decision trees can be computationally expensive, particularly for large and high-dimensional datasets, largely due to the exhaustive search over candidate splits at each node. To improve computational efficiency, we propose \textit{Data-Informed Centroid Splitting (DICS)}, a clustering-based framework that constructs a compact and informative set of candidate splits using data-driven priors. By incorporating class-aware structure, DICS significantly reduces the split search space for \textit{classification tasks} while preserving predictive performance. We further provide theoretical analysis showing that under the stated assumptions, DICS does not degrade the performance of classification trees compared to exhaustive split search.
DICS can be incorporated into
classification trees, random forests, and gradient-boosting models.
Extensive experiments demonstrate that DICS achieves comparable accuracy while substantially reducing training time across synthetic and benchmark datasets, highlighting the benefit of integrating data-informed priors into split selection for scalable classification tree learning.
\end{abstract}

\section{Introduction}

Data-driven methods have become fundamental across a wide spectrum of real-world applications, where systems must extract insights from large and complex datasets to make reliable decisions. Tasks such as spam filtering, targeted advertising, fraud detection, and scientific discovery all depend on models that can capture intricate patterns while remaining computationally efficient. Among the broad family of machine learning techniques, decision tree--based models are particularly appealing due to their interpretability, flexibility, and strong empirical performance. Ensemble variants, including random forests~\cite{breiman2001random} and gradient boosting machines~\cite{friedman2001greedy}, are especially effective, as they can model nonlinear relationships and higher-order feature interactions while maintaining relatively low computational overhead. Their hierarchical structure further enables them to handle heterogeneous data and provide interpretable decision rules, making them a key component of modern scalable learning systems.

\textit{Decision tree} (DT)~\cite{breiman2017classification}, also known as \emph{Classification and Regression Tree (CART)}, is a simple and interpretable model that produces predictions by recursively partitioning the feature space through a sequence of binary decisions. Despite its conceptual simplicity, training a decision tree can be computationally demanding. At each node, the algorithm evaluates a large number of candidate split points, which are typically derived from sorted feature values, and selects the optimal split according to a predefined criterion. This \textit{exhaustive search process} is repeated independently across nodes, leading to significant computational costs as the number of samples, features, or tree depth increases.

To address these challenges, a substantial body of work has focused on improving the efficiency and scalability of tree construction. Early approaches such as SLIQ~\cite{mehta1996sliq} and RainForest~\cite{gehrke1998rainforest} employ presorting techniques, breadth-first growth strategies, and compact data representations to decouple scalability from split evaluation. QUEST~\cite{loh1997split} addresses a related problem by enabling fast tree construction while reducing variable-selection bias. \cite{chickering2001efficient} proposed dynamic split-point selection strategies that restrict the number of candidate thresholds for continuous features; however, these methods often rely on restrictive assumptions and may not scale well in high-dimensional settings. In the context of ensemble learning, Extremely Randomized Trees~\cite{geurts2006extratrees} further reduce computational cost by introducing randomness in both feature selection and split thresholds, thereby avoiding exhaustive search. More recent systems, such as XGBoost~\cite{chen2016xgboost} and LightGBM~\cite{ke2017lightgbm}, achieve substantial efficiency gains through approximate split finding, sparsity-aware computation, and histogram-based binning techniques. Collectively, these developments highlight that split selection remains a primary computational bottleneck in tree-based learning.

In this paper, we address the computational challenges of \textit{Classification
Trees} by exploiting structural information in the data to guide split
selection.
In classification tasks, the underlying data patterns provide valuable structural information that can guide split selection. 
Motivated by this observation, we propose \textit{\textbf{Data-Informed Centroid Splitting (DICS)}}, a method that leverages data-driven priors through clustering to construct a significantly reduced set of candidate splits, thereby substantially accelerating the tree training process.

Beyond conventional greedy tree induction, another line of work considers
globally optimized sparse decision trees, where branch-and-bound and related
search strategies optimize the complete tree structure rather than selecting
splits greedily~\cite{hu2019osdt,zhang2023optimal,lin2020generalized,aglin2020learning}.
Within this setting, McTavish et al.~\cite{mctavish2022fast}, for example,
use a boosted tree ensemble as a reference model to identify informative
thresholds for accelerating the optimization of sparse decision trees.
While DICS shares the general motivation of restricting the continuous split
space, it derives the candidates directly from the clustering structure of
the input data without requiring an auxiliary predictive model. Clustering
has also been incorporated more directly into tree construction,
Clus-DTI~\cite{barros2011clustering} uses clustering to decompose a classification problem
into simpler subproblems before inducing decision trees, while
BDTKS~\cite{wang2020linear} recursively applies K-means within a multivariate binary tree and converts centroid-based separations into hyperplane tests.
In contrast, DICS uses clustering only to construct a reusable set of
univariate feature--threshold candidates, leaving the underlying greedy
tree objective unchanged and allowing the same candidate-generation
strategy to be used with classification trees, random forests, and
gradient-boosting models.

In summary, our contributions are as follows:
\begin{enumerate}[nosep]
    \item We propose \textit{Data-Informed Centroid Splitting (DICS)}, a clustering-driven approach for constructing compact candidate split sets in classification trees.
    \item We provide theoretical analysis showing that decision trees built with DICS preserve predictive performance compared to standard exhaustive split search.
    \item We conduct extensive experiments demonstrating that DICS achieves comparable accuracy while significantly improving computational efficiency.
\end{enumerate}

\section{Preliminaries}

\subsection{Classification trees}\label{sec:prelim_dt}

We consider a multi-class classification problem. Let $\Xbf=[\xbm_1,\ldots,\xbm_N]\in\mathbb{R}^{P\times N}$ denote the feature matrix and $\ybm\in[K]^N$ the corresponding labels, where $N$ is the number of samples, $P$ is the number of features, and $K$ is the number of classes. Formally, a classification tree can be defined by $T(\xbm;\bm{\theta})=\sum_{m=1}^Mc_m\Ibb{(\xbm\in R_m)}$ where $\bm{\theta}=\{(c_m,R_m)\}_{m=1}^M$ is the parameter, $R_m$ denotes the $m$-th rectangle region, $c_m \in [K]$ is the predicted class assigned to $R_m$, and $M$ is the total number of regions. To fit the model, we minimize the following empirical risk:
\begin{align}\label{eq:obj_1}
    \Lcal(\bm{\theta}) = \sum_{i=1}^N\ell(y_i,T(\xbm_i;\bm{\theta}))=\sum_{m=1}^M\sum_{\xbm_i\in R_m}\ell(y_i,c_m).
\end{align}
Solving \eqref{eq:obj_1} typically involves two stages: (1) learning the discrete tree structure $\{R_m\}_{m=1}^{M}$, and (2) estimating the optimal labels $c_m$ given the partition $\{R_m\}_{m=1}^{M}$. However, the first step makes \eqref{eq:obj_1} non-differentiable and computationally challenging. In fact, finding the optimal data partition induced by a decision tree is NP-complete~\cite{laurent1976constructing}.

Each region $R_m$ corresponds to a leaf node and is formed by a sequence of recursive binary splits. Equivalently, every $R_m$ can be expressed as the intersection of splitting constraints along the path from the root to the corresponding leaf: $R_m = \bigcap_{t \in \mathcal{P}_m} \left\{\xbm \in \mathbb{R}^P : x_{f_t} \in S_t \right\}$,
where $\mathcal{P}_m$ denotes the set of internal nodes along the path to leaf $m$, $f_t$ is the feature selected at node $t$, and $S_t \in \{(-\infty,\tau_t), [\tau_t,\infty)\}$ specifies the split induced by threshold $\tau_t$.
A feature-threshold pair $(f,\tau)$ defines a split that partitions the index set $[N]=\{1,\ldots,N\}$ into two subsets:
\begin{align*}
\mathcal{I}_L &= \{i\in\mathcal{I} : \Xbf_{f,i} < \tau\},\quad 
\mathcal{I}_R = \{i\in\mathcal{I} : \Xbf_{f,i} \ge \tau\}.
\end{align*}

The search for the optimal splits is typically performed using a greedy, top-down strategy. Starting from the root node, all candidate splits are evaluated at each node. Specifically, for each feature $f$, the samples are first sorted according to their values along that feature, and candidate thresholds are chosen as the midpoints between consecutive distinct values.
The quality of a split $(f,\tau)$ is measured by the weighted post-split impurity:
\begin{align}\label{eq:quality_measure}
Q(f,\tau) = \frac{|\mathcal{I}_L|}{N} G(\mathcal{I}_L) + \frac{|\mathcal{I}_R|}{N} G(\mathcal{I}_R).
\end{align}
Here, $G(\mathcal{I})$ denotes the impurity of a node indexed by $\mathcal{I}$. Common choices include the Gini index and entropy (deviance). To compute $G(\mathcal{I})$, we first estimate the empirical class distribution within $\mathcal{I}$ as $\hat{\pi}_c = \frac{1}{|\Ical|} \sum_{i\in\Ical} \Ibb(y_i = c)$. The Gini index is defined as $G(\Ical) = \sum_{c=1}^K \hat{\pi}_c (1 - \hat{\pi}_c) = 1 - \sum_{c=1}^K \hat{\pi}_c^2$~\cite{gini1912variabilita,breiman2017classification}, while the entropy is given by $G(\Ical) = -\sum_{c=1}^K \hat{\pi}_c \log \hat{\pi}_c$~\cite{Entropy}. This impurity-based criterion underlies the standard greedy split selection in classification trees and is widely used in tree-based methods~\cite{breiman2017classification}.

\subsection{Ensemble trees}

Although a single decision tree offers strong interpretability, it typically suffers from high variance, instability, and sensitivity to noise in the training data. In practice, ensemble tree methods are therefore often preferred. These approaches combine multiple decision trees to construct a stronger predictive model. The main types of ensemble trees include \textit{bagging} (or \textit{bootstrap aggregating}), \textit{Random Forest} (RF, a specific instance of bagging)~\cite{breiman2001random}, \textit{boosting}, and etc. 

For classification tasks, an ensemble tree typically makes predictions via majority voting: $\hat{y}(\xbm)=\argmax_{c\in[K]}\sum_{b=1}^B\Ibb(T_b(\xbm)=c)$, where $\{T_b\}_{b=1}^B$ are the individual trees. In bagging, each tree $T_b$ is trained on a bootstrap sample $(\Xbf^{*b}, \ybm^{*b})$. Random Forest further reduces variance by \textit{de-correlating} trees, typically by restricting each tree to a random subset of features (of size $\lfloor \sqrt{P} \rfloor$ for classification). In boosting, trees are constructed sequentially, with each new tree designed to correct the errors made by the previous ones. Among these approaches, gradient boosting methods such as XGBoost~\cite{chen2016xgboost} and LightGBM~\cite{ke2017lightgbm} are widely recognized for achieving strong predictive performance on tabular data.

\subsection{Computational bottleneck}\label{sec:bottleneck}

As discussed in earlier sections, the most challenging and computationally demanding component of a Decision Tree (DT) is learning the discrete tree structure, which is equivalent to searching for optimal splits (i.e., feature–threshold pairs). The standard approach adopts a greedy, top-down strategy: at each node, it \textbf{\textit{exhaustively}} evaluates every split out of $(N-1)P$ candidates, corresponding to $P$ features and up to $(N-1)$ candidate thresholds per feature (typically taken as midpoints between consecutive sorted sample values). As a result, the overall training cost scales rapidly with the sample size, tree depth, and ensemble size.

This \textbf{\textit{large candidate space}} is the main computational bottleneck of DT training. To alleviate this issue, a popular alternative is the histogram-based method~\cite{chen2016xgboost,li2007mcrank,ke2017lightgbm}. This approach discretizes each feature into a finite number of bins and constructs feature histograms, allowing the optimal split to be found by scanning histogram bins instead of raw data values.
However, this binning strategy introduces additional approximation errors. Specifically, multiple distinct feature values are merged into the same bin, and candidate splits are restricted to bin boundaries only. This leads to two main drawbacks: (1) \textit{quantization error} due to coarse discretization of continuous features, and (2) a \textit{trade-off} between efficiency and accuracy that depends on the number of bins used. In particular, using fewer bins improves computational efficiency but increases information loss, while using more bins reduces quantization error at the cost of higher computational and memory overhead.

Nevertheless, both exhaustive search and histogram-based methods lack prior guidance on which splits are likely to be informative. This motivates our approach, which (i) constructs a compact, data-informed set of candidate splits, and (ii) enforces a fixed per-node evaluation budget, ensuring that the split-search cost remains predictable and controlled.

\section{Methodology}\label{sec:methodology}

In this section, we introduce a novel approach, termed \textit{Data-Informed Centroid Splitting (DICS)}, which precomputes a compact, data-informed set of candidate thresholds for use in classification trees (see \Cref{sec:CBS}). This strategy directly addresses the split-search bottleneck without modifying the underlying tree objective.
We then describe how DICS can be integrated into standard classification tree-based frameworks, including classical decision trees, random forests~\cite{breiman2001random}, and gradient boosting systems, in \Cref{sec:CG_methods}. We further analyze the computational complexity of DICS in \Cref{sec:complexity}.

\subsection{Data-Informed Centroid Splitting}\label{sec:CBS}

A fundamental assumption in classification is that samples belonging to the same class are generally close to each other in the feature space. This principle underlies many methods, such as k-nearest neighbors and linear discriminant analysis~\cite{cover1967nearest,duda2006pattern,bishop2006pattern}. Consequently, in ideal scenarios, one might expect the classification boundaries to roughly align with clusters formed by the data even without in the absence of label information.
Although this alignment may not hold exactly in practice, clustering results can still provide valuable guidance and serve as useful prior information. A simple illustrative example is shown in the left panel of \Cref{fig:illustration}: we generate $N=600$ two-dimensional samples for two classes with $\xbm | y=0 \sim \Ncal((-1,0), \Sigma)$ and $\xbm| y=1 \sim \Ncal((1,1), \Sigma)$ and $\Sigma=[0.25,0.1;0.1,0.25]$. In this setting, the optimal Bayes decision boundary (labeled as classification boundary) is well approximated by the separator obtained from applying K-means clustering to the data. 

\begin{figure}[http]
\centering
\newlength{\figheight}
\setlength{\figheight}{4cm}
\begin{minipage}{0.3\textwidth}
    \centering
    \includegraphics[height=\figheight]{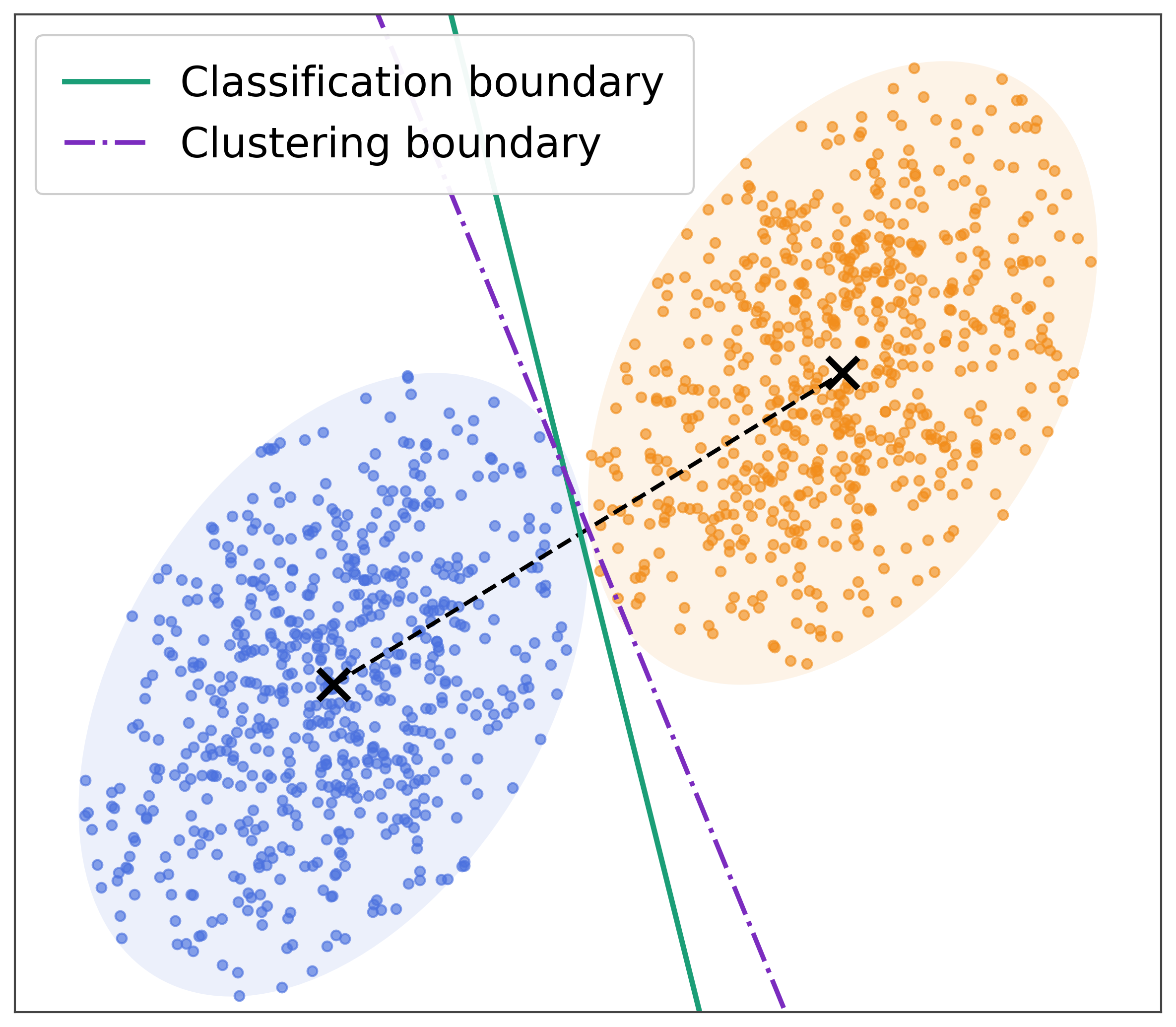}
\end{minipage}
\hspace{0.1\textwidth}
\begin{minipage}{0.3\textwidth}
    \centering
    \includegraphics[height=\figheight]{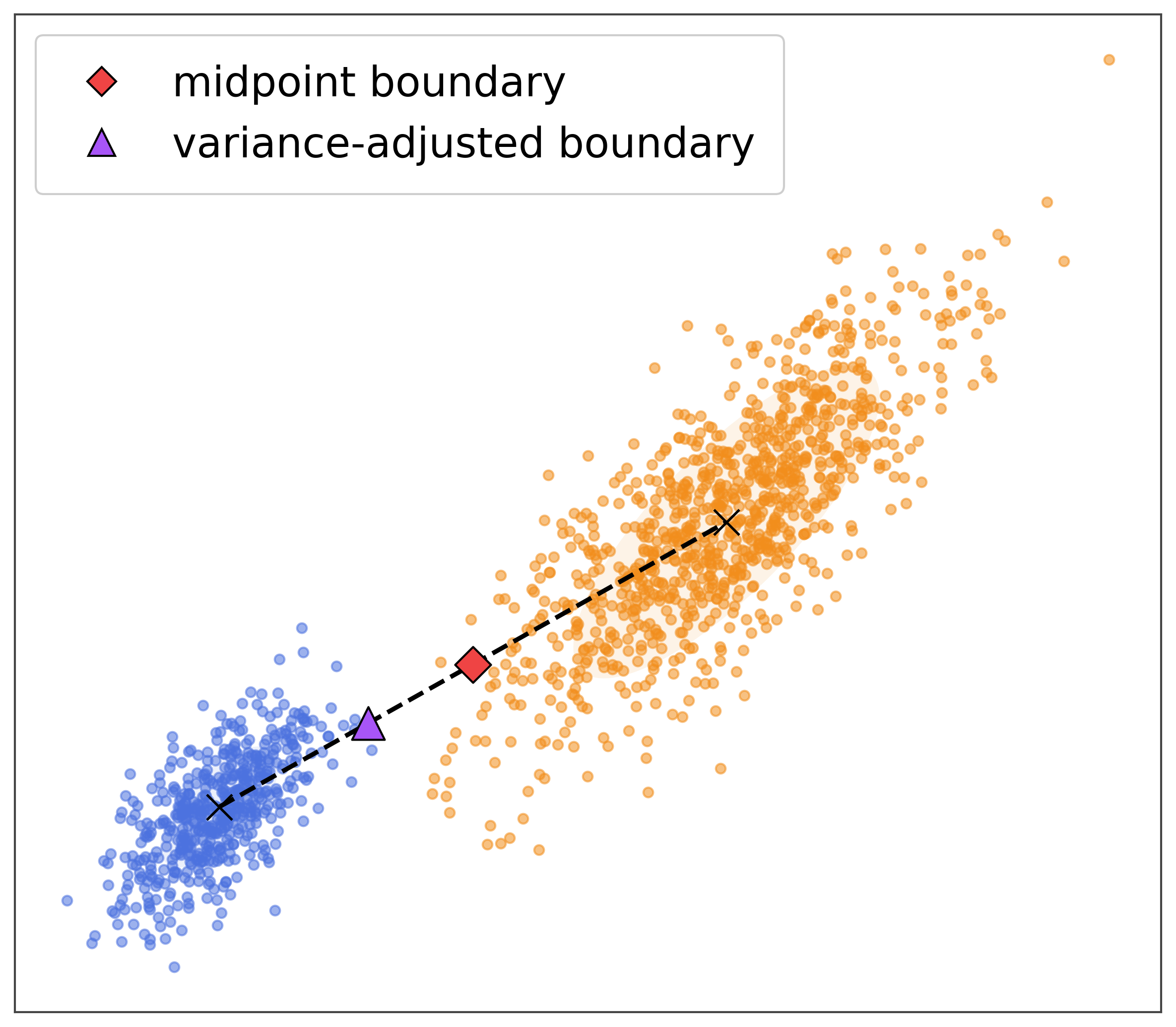}
\end{minipage}
\hspace{-0.05\textwidth}
\begin{minipage}{0.32\textwidth}
    \centering
    \vspace{0.05in}
    \includegraphics[height=0.51\figheight]{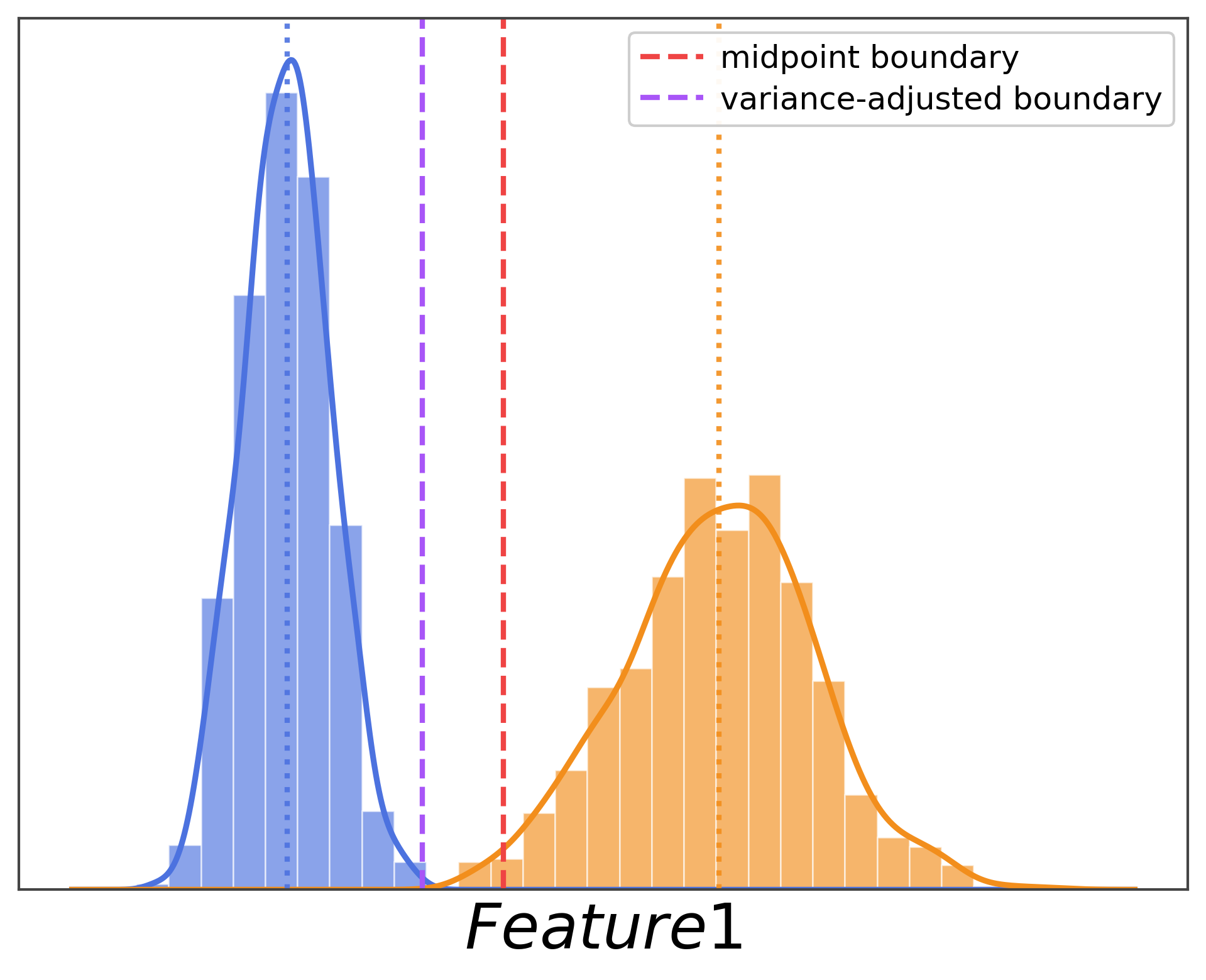}\\
    \includegraphics[height=0.51\figheight]{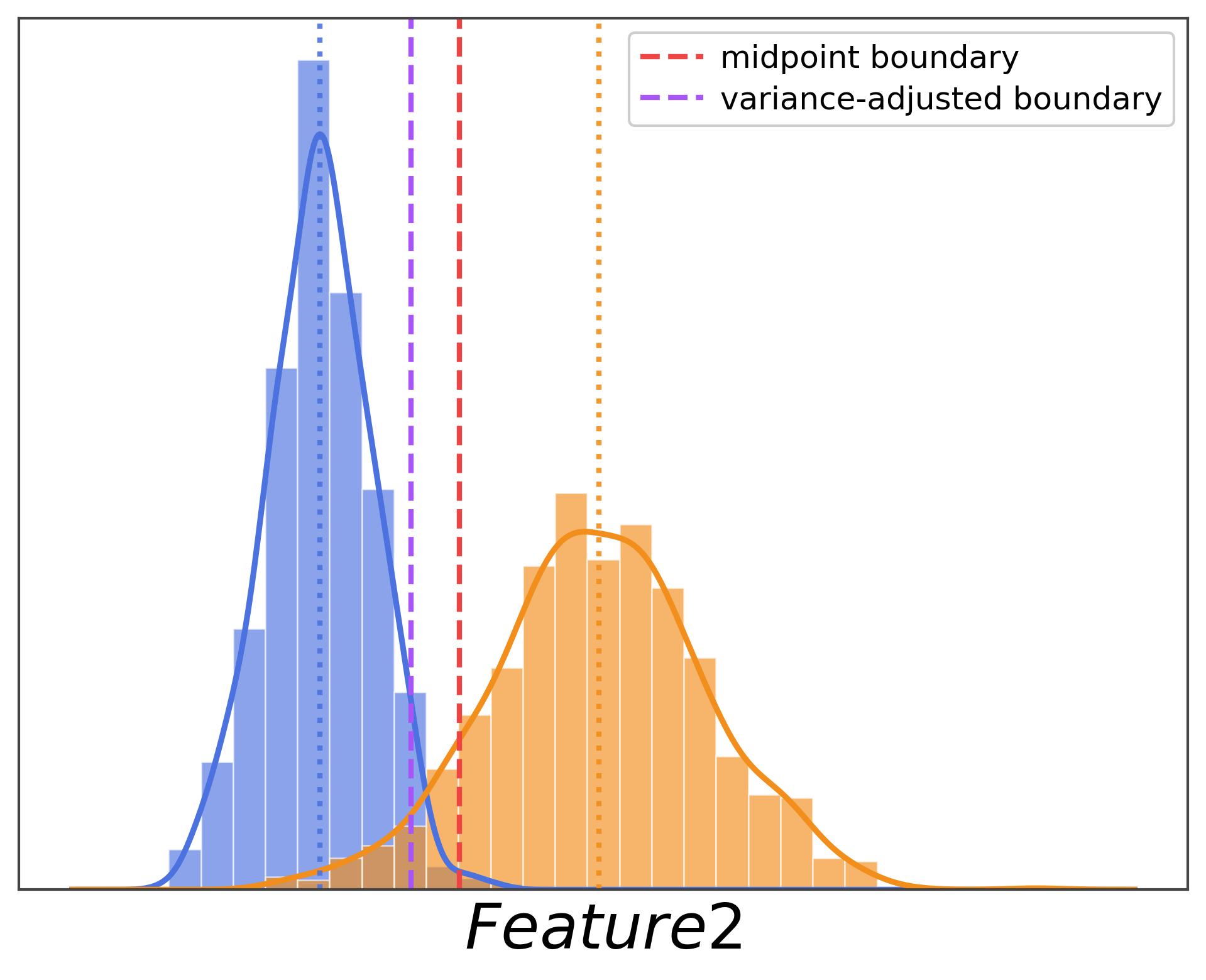}
\end{minipage}
\caption{Illustrative example: (left) classification vs.\ clustering boundaries; (right) midpoint vs.\ variance-adjusted boundaries.}
\label{fig:illustration}
\end{figure}

Moreover, clustering can typically be performed efficiently, for instance, via K-means~\cite{macqueen1967multivariate}. Motivated by the observed alignment between clustering boundaries and classification decision boundaries, as well as the computational effectiveness of K-means, we propose a novel clustering-guided approach for generating candidate splits.
Specifically, let $\{\mubm_c\}_{c=1}^K\subset\Rbb^{P}$ be the centroids obtained from applying K-means clustering to the feature matrix $\Xbf$. 
Each centroid $\mubm_c$ represents a dominant mode of the underlying data distribution, providing a coarse yet stable summary of where the data concentrates. 

We propose \textbf{Data-Informed Centroid Splitting (DICS)} approach, which leverages clustering boundaries as candidate classification boundaries, further yielding splits for tree-based models. In particular, for any pair of clusters $c_1$ and $c_2$, a boundary can be defined based on their centroids. For a given feature $f$, this induces a candidate split of the form $\zeta(\mu_{c_1}^{(f)}, \mu_{c_2}^{(f)})$.
However, in the absence of labels, there is no unique ``correct'' boundary in clustering. In K-means, a point $\xbm$ is assigned to the cluster with the nearest centroid, i.e., $\argmin_{1 \leq c \leq K} \|\xbm - \mubm_c\|$. The boundary between two clusters is thus given by the set of points satisfying $\|\xbm - \mubm_{c_1}\| = \|\xbm - \mubm_{c_2}\|$, which induces a Voronoi partition of the space~\cite{voronoi1908nouvelles}. Under this construction, the corresponding one-dimensional split is given by the midpoint, $\zeta(\mu_{c_1}^{(f)},\mu_{c_2}^{(f)})=\frac{1}{2}(\mu^{(f)}_{c_1}+\mu^{(f)}_{c_2})$. 
Nevertheless, this midpoint-based boundary may be suboptimal when clusters exhibit different variances. For example, if one class is significantly more dispersed than the other (as illustrated in the right panel of \Cref{fig:illustration}), the midpoint can be biased toward the wider cluster. To address this issue, we propose a variance-adjusted boundary in which the distances from the centroids to the boundary are proportional to their standard deviations, i.e. $\frac{\zeta-\mu_{c_1}^{(f)}}{\mu_{c_2}^{(f)}-\zeta}=\frac{\hat{\sigma}_{c_1}}{\hat{\sigma}_{c_2}}$, which implies that 
$\zeta(\mu_{c_1}^{(f)},\mu_{c_2}^{(f)}) = \frac{\hat\sigma_{c_1}\mu^{(f)}_{c_2}+\hat\sigma_{c_2}\mu^{(f)}_{c_1}}{\hat\sigma_{c_1}+\hat\sigma_{c_2}}$, 
where $\hat{\sigma}_{c_1}$ and $\hat{\sigma}_{c_2}$ are the standard deviations of feature $f$ within clusters $c_1$ and $c_2$, respectively. This formulation provides a more adaptive and robust candidate split when cluster spreads differ. 

We also note that there are $\binom{K}{2} = \frac{K(K-1)}{2}$ possible pairs of centroids, whereas $K$ clusters are present. When two centroids are very close, the boundary between them is unlikely to correspond to a meaningful split in the tree. To eliminate such redundant splits, we retain only $m$ centroid pairs corresponding to the $m$ largest inter-centroid distances $D_{ij}=\|\mubm_{i}-\mubm_j\|$. The resulting procedures are summarized in \Cref{alg:split_candidates}.

\begin{algorithm}[h]
\caption{Data-Informed Centroid Splitting (DICS)}
\label{alg:split_candidates}
\begin{algorithmic}[1]
\REQUIRE Feature matrix $\Xbf \in \mathbb{R}^{P \times N}$, maximum pair count $m < \frac{K(K-1)}{2}$.
\ENSURE Split-candidate dictionary $\mathcal{S}$.

\STATE Fit K-means on $\Xbf$ with $K$ clusters to obtain centroids $\{\mubm_c\}_{c=1}^K$

\STATE Compute pairwise centroid distances: $D_{ij}\leftarrow \|\mubm_i - \mubm_j\|, \forall i < j$
\STATE Select $\mathcal{P} \leftarrow$ the $m$ pairs $(i,j)$ with largest $D_{ij}$

\FOR{$f = 1$ \TO $P$}
    \STATE $T_f \leftarrow 
        \left\{ \big(f,\zeta(\mu_i^{(f)},\mu_j^{(f)})\big) : (i,j) \in \mathcal{P} \right\}$, where $\zeta=(\hat\sigma_{c_1}\mu^{(f)}_{c_2}+\hat\sigma_{c_2}\mu^{(f)}_{c_1})/(\hat\sigma_{c_1}+\hat\sigma_{c_2})$.
\ENDFOR
\STATE \textbf{return} $\mathcal{S}=\cup_{f=1}^P\{T_f\}$.

\end{algorithmic}
\end{algorithm}

\subsection{Cluster-guided (ensemble) trees}\label{sec:CG_methods}

The DICS algorithm (\Cref{alg:split_candidates}) precalculates candidate splits guided by clustering, and the classification tree is subsequently grown from the resulting split-candidate dictionary $\Scal$. The corresponding adaptations of the decision tree and random forest, referred to as the \emph{Cluster-Guided Classification Tree} (CGCT) and \emph{Cluster-Guided Random Forest} (CGRF), are summarized in \Cref{alg:Tree_build} (see \Cref{sec:CGCT} for details) and \Cref{alg:cgrf} (see \Cref{sec:CGRF} for details), respectively.

As for gradient boosting machine, we propose \textbf{Fast} \textbf{C}lassification \textbf{G}radient \textbf{B}oosting \textbf{M}achine (\textit{FastC-GBM}), which leverages candidate splits generated by DICS and incorporates the Gradient-based One-Side Sampling (GOSS) technique~\cite{ke2017lightgbm} and column (feature) subsampling technique~\cite{chen2016xgboost,ke2017lightgbm}. In particular, when evaluating the quality of a split $(f,\tau)$, the computational cost is further reduced by sampling a subset of samples with small gradient magnitudes and subsampling features. The full procedure of FastC-GBM is summarized in \Cref{alg:cgxgb_goss} (see \Cref{sec:FastC-GBM} for details).

\subsection{Implementation and complexity}\label{sec:complexity}

The candidate splits generated by DICS depend on the results of K-means. Since DICS only requires stable centroids, we adopt mini-batch K-means \cite{sculley2010web} together with K-means++ initialization \cite{arthur2007k}. Under this strategy, DICS introduces a one-time computational overhead of order $\mathcal{O}(TKbP)$, where $T$ is the number of iterations, $b$ is the mini-batch size, $K$ is the number of clusters, and $P$ is the feature dimension. This one-time is relative small compared to the split finding stage. 

For split finding, DICS requires a computational cost of $\mathcal{O}(mPU)$, where $m$ is the number of centroid pairs and $U$ denotes the number of operations needed to evaluate the quality of a candidate split. In contrast, an exhaustive search incurs a cost of $\mathcal{O}(NPU)$, while histogram-based search reduces this to $\mathcal{O}(HPU)$, where $H$ is the number of bins, typically set to 255 to balance efficiency and accuracy~\cite{ke2017lightgbm}. Consequently, DICS significantly reduces the per-tree training cost to a factor of $m$. When the number of classes $K$ is small (which is often the case), $m = K(K-1)/2$ remains small; for example, $m=10$ when $K=5$. When $K$ is large, our sensitivity experiments (see Appendix~\ref{sec:app_sensitivity}) show that,
in practice, a small value of $m$ is still sufficient to achieve comparable
accuracy. We further observe that accuracy remains stable across a range of tree depths ($d$) and mini-batch sizes ($b$).

\section{Split-Level Stability Analysis}
In this section, we provide a theoretical analysis of the proposed DICS algorithm. Our main result in \Cref{thm:main} shows that the gain of the optimal split selected by DICS is close to that of the optimal split selected by a standard decision tree (DT). In particular, the difference between these two gains is bounded by $\mathcal{O}(1/\sqrt{N})$, implying that as the number of samples $N$ increases, the gap vanishes asymptotically. 

Let the set of candidate splits of DT be given by $\Scal_{\mathrm{DT}}=\{(f,\bar\tau_i^{(f)}):i=1,\ldots,N-1\}$ where $\bar\tau_i^{(f)}$ are the midpoints of the sorted observation values along feature $f$. The set of candidate splits generated by DICS is denoted by $\Scal_{\mathrm{DICS}}=\{(f,\tilde\tau_i^{(f)}):i=1,\ldots,m\}$ where $\tilde\tau_i^{(f)}$ are the thresholds produced by the DICS procedure. 
For any split $(f,\tau)$, it defines a partition $\mathcal{P}_{f,\tau} = (\mathcal{I}_L, \mathcal{I}_R)$, where $\mathcal{I}_L = \{ i \in [N] : \Xbf_{f,i} < \tau \}$ and $\mathcal{I}_R = \{ i \in [N] : \Xbf_{f,i} \ge \tau \}$ are the corresponding index sets with sizes $n_L=|\Ical_L|$ and $n_R=|\Ical_R|$. 
The optimal splits for DT and DICS are defined as $(\bar f^*,\bar\tau^*)=\arg\min_{(f,\tau)\in\Scal_{\mathrm{DT}}}Q(f,\tau)$ and $(\tilde f^*,\tilde\tau^*)=\arg\min_{(f,\tau)\in\Scal_{\mathrm{DICS}}}Q(f,\tau)$, respectively. Consequently, the corresponding gains for DT and DICS are given by $G(\Ical)-Q(\bar f^*,\bar\tau^*)$ and $G(\Ical)-Q(\tilde f^*,\tilde\tau^*)$, respectively. For notational convenience, we denote $\bar Q^*=Q(\bar f^*,\bar\tau^*)$ and $\tilde Q^*=Q(\tilde f^*,\tilde\tau^*)$. 

\begin{theorem}\label{thm:main}
For every $f$, let $\bar\tau_*^{(f)}=\arg\min_{\tau\in\{\bar\tau_i^{(f)},1\le i\le N-1\}}Q(f,\tau)$ be the optimal threshold for DT and $\tilde\tau_*^{(f)}=\arg\min_{\tau\in\{\tilde\tau_i^{(f)},1\le i\le m\}}Q(f,\tau)$ be the optimal threshold for DICS, where $Q(f,\tau)$ is the Gini impurity function. Suppose that $\alpha=\min_f\{\frac{n_L^{(f)}}{N},\frac{n_R^{(f)}}{N}\}>0$. Further assume that a density regularity condition holds, i.e., for every feature $f$, the marginal density $p(x^{(f)})$ is bounded above by a constant $M$ in a neighborhood of the optimal split ${\tau}_*^{(f)}$. Then
$\delta = d_H\!\left(\mathcal{P}_{f,\bar{\tau}_*^{(f)}}, \mathcal{P}_{f,\tilde{\tau}_*^{(f)}}\right)
= \mathcal{O}_P(\sqrt{N})$.
Moreover, the following bound holds:
\begin{align*}
|\bar{Q}^* - \tilde{Q}^*|
< \left(8 + \frac{1}{\alpha(1-\alpha)} \cdot \frac{\delta}{N}\right)\frac{\delta}{N}.
\end{align*}
\end{theorem}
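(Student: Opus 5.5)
The plan is to separate the statement into a deterministic perturbation bound and a probabilistic estimate of $\delta$. For the perturbation bound we work at a fixed node with index set $\Ical$ of size $N$. We read $d_H(\mathcal{P}_{f,\tau},\mathcal{P}_{f,\tau'})$ as the number of samples whose side assignment changes. For two thresholds on the same feature, this is the number of observations $\Xbf_{f,i}$ lying between $\tau$ and $\tau'$. Write $n_c$, $n_{L,c}$, $n_{R,c}$ for class counts. The weighted Gini impurity then takes the form
\begin{align*}
Q(f,\tau)=1-\frac{1}{N}\Big(\frac{\sum_c n_{L,c}^2}{n_L}+\frac{\sum_c n_{R,c}^2}{n_R}\Big).
\end{align*}
When the threshold moves from $\bar\tau_*^{(f)}$ to $\tilde\tau_*^{(f)}$, exactly $\delta$ points cross from one side to the other. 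Hence $n_L$ and $n_R$ each change by $\delta$, and the class-count vectors change by a nonnegative vector of total mass $\delta$. The first key step is to bound the change of each term $\sum_c n_{L,c}^2/n_L = n_L\sum_c\hat\pi_{L,c}^2$. We split the change into a numerator part and a denominator part.

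\emph{Numerator part.} Since $\sum_c n_{L,c}^2$ changes by at most $2n_L\delta+\delta^2$, dividing by $n_L$ and by $N$ yields contributions of order $\delta/N$. Collecting these linear terms from both children and both directions of change, with crude constants, is how I expect to obtain the factor $8\,\delta/N$.

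\emph{Denominator part.} Changing $n_L$ to $n_L\pm\delta$ gives a factor $\frac{1}{n_L}-\frac{1}{n_L\pm\delta}=\pm\frac{\delta}{n_L(n_L\pm\delta)}$. Combined with $\sum_c n_{L,c}^2\le n_L^2$ and the balance assumption $n_L,n_R\ge\alpha N$ (so that also $n_Ln_R\ge\alpha(1-\alpha)N^2$), this produces the quadratic term $\frac{1}{\alpha(1-\alpha)}\frac{\delta^2}{N^2}$.

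This gives $|Q(f,\bar\tau_*^{(f)})-Q(f,\tilde\tau_*^{(f)})|$ bounded by the stated expression. To pass to $|\bar Q^*-\tilde Q^*|$ we use a minimax argument. Since $\bar Q^*=\min_f Q(f,\bar\tau_*^{(f)})$ and $\tilde Q^*=\min_f Q(f,\tilde\tau_*^{(f)})$, and $|\min_f a_f-\min_f b_f|\le\max_f|a_f-b_f|$, the featurewise bound transfers directly, with $\delta$ taken as the maximum over $f$.

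The probabilistic claim $\delta=\mathcal{O}_P(\sqrt N)$ is the main obstacle, since it needs the DICS threshold to be close to the DT-optimal threshold. I would argue through population quantities. Let $\tau_*^{(f)}$ be the population-optimal threshold. The empirical DT minimizer and the best DICS candidate, the latter via consistency of the K-means centroids and within-cluster standard deviations and the alignment premise of \Cref{sec:CBS}, should both lie within $\mathcal{O}_P(N^{-1/2})$ of $\tau_*^{(f)}$. On the band between them the density is at most $M$, so the expected number of samples in the band is at most $MN\cdot\mathcal{O}_P(N^{-1/2})=\mathcal{O}_P(\sqrt N)$. A binomial or Chernoff bound, uniform over the finitely many features, controls the actual count.

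The weak link is the claim that a DICS candidate lies within $N^{-1/2}$ of $\tau_*^{(f)}$. I would state it as an implicit alignment hypothesis: the population variance-adjusted centroid boundary coincides with the optimal split. Given that hypothesis, root-$N$ consistency of K-means centroids (Pollard) supplies the rate. Putting everything together gives $\delta/N=\mathcal{O}_P(N^{-1/2})$, which is the vanishing-gap conclusion.
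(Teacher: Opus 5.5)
Your architecture is the paper's. The deterministic half splits the Gini change into numerator and denominator parts, uses $S_L'\le n_L'^2$ and $n_Ln_R\ge\alpha(1-\alpha)N^2$, then passes to minima over $f$. The probabilistic half runs a triangle inequality through the population threshold, pushes a root-$N$ centroid rate through the Lipschitz map $\zeta$, and applies the density bound on the band. The deterministic half is fine. To land exactly on $8$ with coefficient $1$ on the quadratic term, absorb $\delta^2/n_L\le\delta$ into the linear term, as the paper does with $|S_L'-S_L|<3\delta n_L$. Your explicit alignment hypothesis is exactly what the paper uses silently when it writes $|\tilde\tau-\tau^*|\le L\sum_c|\hat\mu_c-\mu_c|$. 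Citing Pollard for K-means centroids also fits better than the paper's Hoeffding bound on class means.

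The genuine gap is on the DT side. You assert $|\bar\tau_*^{(f)}-\tau_*^{(f)}|=\mathcal{O}_P(N^{-1/2})$ without argument, and under the stated hypotheses (bounded density, $\alpha>0$) this is false in general. For two classes, the weighted Gini criterion is twice the least-squares stump criterion for the $0/1$ response. Suppose $P(Y=1\mid x^{(f)})$ is smooth at $\tau_*^{(f)}$. Then the population criterion is (generically) quadratic near its minimum, while the empirical fluctuation over a window of width $h$ is of order $\sqrt{h/N}$. Balancing these gives the cube-root rate $h\asymp N^{-1/3}$ (Kim--Pollard cube-root asymptotics; Banerjee--McKeague for split points in trees). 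If the density is also bounded below near $\tau_*^{(f)}$, $\delta$ is then of order $N^{2/3}$, not $\mathcal{O}_P(\sqrt N)$. The Gaussian class-conditional settings that motivate DICS are exactly this smooth case.

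The paper instead asserts $|\tau^*-\bar\tau^*|=\mathcal{O}_P(1/N)$ from change-point estimation. That rate holds only when the class posterior jumps at $\tau_*^{(f)}$: the population criterion then has a kink, and a drift of order $c\,h$ dominates the $\sqrt{h/N}$ noise. To make your proof go through, add this jump condition as an explicit hypothesis. You also need uniqueness and separation of the population minimizer, which you do not state. Without it, nothing ensures that the empirically selected $\tilde\tau_*^{(f)}$ is the aligned candidate rather than one of the other $m-1$ thresholds.

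One further repair concerns the band between $\bar\tau_*^{(f)}$ and $\tilde\tau_*^{(f)}$. It is data-dependent, so its expected count is not $N\int p$ over the band; the paper makes the same slip. Instead, work on the high-probability event that both thresholds lie in $[\tau_*^{(f)}-CN^{-1/2},\tau_*^{(f)}+CN^{-1/2}]$. There, bound $\delta$ by the number of points in this fixed window, a binomial count with mean at most $2CM\sqrt N$, and apply your Chernoff bound to that.
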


\begin{proof}
The proof is provided in Appendix~\ref{sec:proof}.
\end{proof}

Here, $d_H(\Pcal_1,\Pcal_2)$ in the above theorem denotes the Hamming distance between two partitions, defined as $d_H(\Pcal_1,\Pcal_2)=|\{i\in[N]:z_i\neq z'_i\}|$ where each partition $\mathcal{P} = (\mathcal{I}_L, \mathcal{I}_R)$ is associated with a label vector $z \in \{L,R\}^N$ defined by $z_i = L$ if $i \in \mathcal{I}_L$ and $z_i = R$ if $i \in \mathcal{I}_R$. 

\textbf{Remark 1.}
The assumption in \Cref{thm:main} states that the data distribution is well-behaved in a neighborhood of the decision boundary. Such a condition is standard in statistical learning theory, as it rules out degenerate cases in which the marginal density vanishes or concentrates excessively near the optimal split. 
Under this assumption, \Cref{thm:main} shows that the partitions induced by the optimal thresholds of DT and DICS differ on at most $\delta = \mathcal{O}_P(\sqrt{N})$ samples.
Although $\delta$ grows with $N$, the difference between the corresponding optimal gains satisfies $\mathcal{O}_P(1/\sqrt{N})$, and therefore converges to zero as $N$ increases. We provide complementary empirical evidence for this split-level behavior in Appendix~\ref{sec:app_split_quality}. This bound is established at the root, where the DICS dictionary is estimated from the same distribution being split; at deeper nodes, the global dictionary is not guaranteed to satisfy the same alignment, and the depth-wise diagnostic in Appendix~\ref{sec:app_split_quality} (Table~\ref{tab:cifar10_depth_gain}) shows the cumulative retained gain ratio $R_d$ declining from $100\%$ at the root to $97.18\%$ by depth~8.

\textbf{Remark 2.} \Cref{thm:main} only establishes that the gain achieved by DICS is asymptotically close to that of DT, which does not imply that one method is necessarily better than the other, as the relationship between gain and classification accuracy is nontrivial. This theoretical result is also supported by our numerical experiments.

\section{Numerical Experiments}\label{sec:numerical}

We evaluate the proposed split-generation method, DICS, across several decision tree–based models, including \textit{(1) standard decision trees (DT), (2) random forests (RF), and (3) gradient boosting systems}. The comparisons are conducted on both synthetic and real-world datasets. All results are obtained on a machine equipped with an Apple M4 Max chip and 64GB of unified memory.

\textbf{Baseline Methods.} As relatively few studies integrate clustering into tree construction, we restrict our comparison in the DT setting to BDTKS \cite{wang2020linear}. We exclude Clus-DTI \cite{barros2011clustering}, as it alters the underlying tree objective. Moreover, since BDTKS applies the K-means algorithm recursively at each node, it becomes computationally expensive for large datasets and is not readily compatible with ensemble methods. Therefore, we do not include it in the RF and boosting settings.

\textbf{Parameters Setup.} We use the Gini index as the splitting criterion to compute the quality $Q(f,\tau)$. The maximum tree depth is set to $D = 8$ for DT and RF, and $D = 3$ for boosting-based methods. The number of sampled candidate splits is fixed at $k = 100$. For $K \le 5$, we set the number of centroid pairs to $m = K(K-1)/2$; otherwise, we use $m = 25$. The batch size for mini-batch K-means is set to $512$.

\begin{table*}[htbp]
\centering
\footnotesize
\setlength{\tabcolsep}{1.7pt}
\begin{tabular}{cc cccc cccc}
\toprule
\multirow{2}{*}{$(N,P)$} & \multirow{2}{*}{Class}
& \multicolumn{3}{c}{Time (sec)$\downarrow$}
& \multicolumn{1}{c}{Speedup$\uparrow$}
& \multicolumn{3}{c}{Test Acc$\uparrow$} \\
\cmidrule(lr){3-5} \cmidrule(lr){6-6} \cmidrule(lr){7-9}
& & DT & BDTKS & CGCT
& $t_{\mathrm{DT}}/t_{\mathrm{CGCT}}$
& DT & BDTKS & CGCT \\
\midrule

\multirow{3}{*}{10000,1000}
& 3  & 2.47 (0.03)   & 30.16      & \textbf{0.19 (0.02)}
     & 13.00
     & \textbf{0.62 (0.00)} & 0.54 & 0.60 (0.00) \\
& 5  & 2.53 (0.01)   & 34.36    & \textbf{0.25 (0.02)}
     & 10.12
     & \textbf{0.54 (0.00)} & 0.39  & 0.52 (0.00) \\
& 10 & 2.52 (0.00)   & 35.68   & \textbf{0.29 (0.02)}
     & 8.69
     & \textbf{0.43 (0.00)} & 0.32 & 0.41 (0.00) \\
\cmidrule(lr){1-9}

\multirow{3}{*}{5000,7000}
& 3  & 8.36 (0.10)   & 29.85     & \textbf{0.58 (0.05)}
     & 14.41
     & \textbf{0.61 (0.01)} & 0.57 & 0.59 (0.01) \\
& 5  & 8.14 (0.03) & 29.44   & \textbf{0.91 (0.04)}
     & 8.94
     & \textbf{0.47 (0.01)} & 0.38  & 0.45 (0.00) \\
& 10 & 8.27 (0.02)  & 30.93  & \textbf{1.04 (0.08)}
     & 7.95
     & \textbf{0.31 (0.00)} & 0.27 & \textbf{0.31 (0.01)} \\
\cmidrule(lr){1-9}

\multirow{3}{*}{20000,5000}
& 3  & 26.89 (0.26) & 107.82 & \textbf{1.20 (0.07)}
     & 22.40
     & \textbf{0.53 (0.00)} & 0.47 & 0.52 (0.00) \\
& 5  & 27.17 (0.04)  & 110.24   & \textbf{1.77 (0.07)}
     & 15.35
     & \textbf{0.47 (0.00)} & 0.38  & 0.46 (0.00) \\
& 10 & 27.98 (0.05)  & 114.27 & \textbf{1.99 (0.09)}
     & 14.06
     & 0.48 (0.00) & 0.39 & \textbf{0.49 (0.00)} \\
\bottomrule
\end{tabular}
\caption{Accuracy and training time (in seconds) for DT, BDTKS, and CGCT in synthetic data. The ratio $t_{\mathrm{DT}}/t_{\mathrm{CGCT}}$ quantifies the training time improvement of CGCT relative to DT.}
\label{tab:synthetic_data_CGCT_rho05}
\end{table*}

\begin{table*}[htbp]
\centering
\footnotesize
\begin{tabular}{cc cc c cc}
\toprule
\multirow{2}{*}{$(N,P)$} & \multirow{2}{*}{Class}
& \multicolumn{2}{c}{Time (sec)$\downarrow$}
& \multicolumn{1}{c}{Speedup$\uparrow$}
& \multicolumn{2}{c}{Test Acc$\uparrow$} \\
\cmidrule(lr){3-4} \cmidrule(lr){5-5} \cmidrule(lr){6-7}
& & RF & CGRF & $t_{\mathrm{RF}}/t_{\mathrm{CGRF}}$ & RF & CGRF \\
\midrule

\multirow{3}{*}{10000,1000}
& 3  & 4.05 (0.13)  & \best{0.36 (0.04)}  & 11.25  & \best{0.64 (0.00)}  & \best{0.64 (0.00)} \\
& 5  & 4.32 (0.02)   & \best{0.41 (0.03)}  & 10.54  & \best{0.56 (0.01)}  & \best{0.56 (0.00)} \\
& 10 & 4.25 (0.04)   & \best{0.53 (0.02)}  & 8.02  & \best{0.45 (0.01)} & \best{0.45 (0.01)} \\
\cmidrule(lr){1-7}

\multirow{3}{*}{5000,7000}
& 3  & 13.27 (0.45)   & \best{0.93 (0.10)}   & 14.27 & \best{0.64 (0.00)} & \best{0.64 (0.01)} \\
& 5  & 13.18 (0.05)  & \best{1.14 (0.05)}  & 11.56 & \best{0.53 (0.00)} & 0.52 (0.00) \\
& 10 & 13.17 (0.04)     & \best{1.66 (0.08)}   & 7.93 & \best{0.36 (0.00)} & \best{0.36 (0.00)} \\
\cmidrule(lr){1-7}

\multirow{3}{*}{20000,5000}
& 3  & 44.47 (1.45)   & \best{1.49 (0.04)}  & 29.84 & \best{0.56 (0.00)} & 0.54 (0.00) \\
& 5  & 48.26 (1.07)  & \best{1.67 (0.10)}  & 28.90 & \best{0.49 (0.00)} & \best{0.49 (0.00)} \\
& 10 & 53.25 (0.60)  & \best{2.09 (0.06)}  & 25.48 & \best{0.52 (0.00)} & \best{0.52 (0.00)} \\
\bottomrule
\end{tabular}
\caption{Accuracy and training time (in seconds) for RF and CGRF in synthetic data. The ratio $t_{\mathrm{RF}}/t_{\mathrm{CGRF}}$ quantifies the training time improvement of CGRF relative to RF. }
\label{tab:synthetic_data_CGRF_rho05}
\end{table*}

\begin{table*}[htbp]
\centering
\setlength{\tabcolsep}{4.0pt}
\resizebox{\linewidth}{!}{
\begin{tabular}{cc ccc cc ccc}
\toprule
\multirow{2}{*}{$(N,P)$} & \multirow{2}{*}{Class}
& \multicolumn{3}{c}{Time (sec)$\downarrow$}
& \multicolumn{2}{c}{Speedup$\uparrow$}
& \multicolumn{3}{c}{Test Acc$\uparrow$} \\
\cmidrule(lr){3-5} \cmidrule(lr){6-7} \cmidrule(lr){8-10}
& & XGBoost & LightGBM & FastC-GBM
& $\rho_1$
& $\rho_2$
& XGBoost & LightGBM & FastC-GBM \\
\midrule

\multirow{3}{*}{10000,1000}
& 3  & 1.62 (0.03)  & 1.16 (0.06) & \best{0.32 (0.00)} & 5.06 & 3.62 & \best{0.68 (0.00)} & \best{0.68 (0.0)} & 0.65 (0.01) \\
& 5  & 2.66 (0.07)  & 1.81 (0.07) & \best{0.35 (0.00)} & 7.60 & 5.17 & 0.59 (0.01) & \best{0.60 (0.01)} & 0.59 (0.01) \\
& 10 & 5.03 (0.02) & 3.31 (0.0) & \best{0.51 (0.01)} & 9.86 & 6.49 & \best{0.47 (0.01)} & 0.46 (0.01) & 0.45 (0.01) \\
\cmidrule(lr){1-10}

\multirow{3}{*}{5000,7000}
& 3  & 6.62 (0.06) & 3.09 (0.03) & \best{0.24 (0.00)} & 27.58 & 12.87 & \best{0.63 (0.00)} & \best{0.63 (0.02)} & 0.61 (0.02) \\
& 5  & 11.25 (0.09) & 4.96 (0.07) & \best{0.28 (0.00)} & 40.18 & 17.71 & \best{0.51 (0.02)} & 0.50 (0.02) & 0.50 (0.02) \\
& 10 & 18.06 (0.45) & 9.80 (0.33) & \best{0.36 (0.00)} & 50.17 & 27.22 & \best{0.35 (0.01)} & \best{0.35 (0.01)} & 0.34 (0.01) \\
\cmidrule(lr){1-10}

\multirow{3}{*}{20000,5000}
& 3  & 8.52 (0.30)  & 5.28 (0.18) & \best{0.66 (0.02)} & 12.90 & 8.00 & \best{0.57 (0.01)} & \best{0.57 (0.01)} & 0.56 (0.01) \\
& 5  & 15.57 (0.61)  & 8.41 (0.24) & \best{0.48 (0.00)} & 32.44 & 17.52 & \best{0.51 (0.01)} & \best{0.51 (0.01)} & 0.49 (0.01) \\
& 10 & 31.41 (0.27) & 14.72 (0.12) & \best{1.01 (0.02)} & 31.10 & 14.57 & \best{0.53 (0.01)} & \best{0.53 (0.01)} & \best{0.53 (0.01)} \\
\bottomrule
\end{tabular}
}
\caption{Accuracy and training time (in seconds) for XGBoost, LightGBM and FastC-GBM in synthetic data. The ratios $\rho_1=t_{\mathrm{XGBoost}}/t_{\mathrm{FastC-GBM}}$ and $\rho_2=t_{\mathrm{LightGBM}}/t_{\mathrm{FastC-GBM}}$ quantify the training time improvement of FastC-GBM relative to XGBoost and LightGBM, respectively. }
\label{tab:xgb_lgb_cgxgb_rho05}
\end{table*}

\subsection{Synthetic data}

\textbf{Formulation.} To evaluate the performance of the tree-based methods, we generate a multiclass synthetic dataset in which the class label depends on a mixture of linear and nonlinear feature effects.
For each simulation, we sample $N$ observations $\{(\xbm_i,y_i)\}_{i=1}^N$ as follows. The $P$-dimensional feature vector $X \in \mathbb{R}^P$ is drawn from a multivariate normal distribution $X \sim \mathcal{N}(0,\Sigma)$, where $\Sigma_{ii} = 1$ for all $i \in [P]$ and $\Sigma_{ij}=\rho$ for all $i \neq j$.
Given $X$, the label $Y$ follows a categorical distribution, $Y\mid X\sim\mathrm{Categorical}(\theta_1,\dots,\theta_K)$, where  
\[\theta_c=P(Y=c|X):=\frac{\exp(s_c)}{\sum_{c'=1}^K\exp(s_{c'})},\quad \forall c=1,\ldots,K\]
with
\begin{align*}
    s_c=\sum_{j \in \mathcal{S}_1} a_j X_j+\sum_{j \in \mathcal{S}_2} b_j \sin\bigl(\pi X_j\bigr)
+
\sum_{j \in \mathcal{S}_3} c_j X_j^2
+
\sum_{\substack{k,l \in \mathcal{S}_4 \\ k < l}} d_{kl} X_{k} X_{l}+\eta.
\end{align*}
Here, the variable sets $\Scal_1, \Scal_2, \Scal_3,$ and $\Scal_4$ correspond to linear, sinusoidal, quadratic, and pairwise interaction effects, respectively. The collection ${\Scal_1,\Scal_2,\Scal_3,\Scal_4}$ forms a partition of $[P]$, with cardinalities given by $|\Scal_1|=\lfloor P/2 \rfloor$, $|\Scal_2|=\lfloor P/4 \rfloor$, $|\Scal_3|=\lfloor P/8 \rfloor$, and $|\Scal_4| = P - \sum_{k=1}^3 |\Scal_k|$. Moreover, the weights $a_j,b_j,c_j,d_{kl},\eta\sim\Ncal(0,1)$. 

\textbf{Results.} For the synthetic datasets, we repeat each simulation 10 times, except for BDTKS, which is run only once due to its high training cost. The mean and standard deviation (reported in parentheses) of the training time and accuracy for all methods across the three settings with $\rho=0.5$ are presented in \Cref{tab:synthetic_data_CGCT_rho05}–\ref{tab:xgb_lgb_cgxgb_rho05}. Additional results for the case $\rho = 0$ are provided in Appendix \ref{sec:app_sim}.

From \Cref{tab:synthetic_data_CGCT_rho05}, CGCT delivers an impressive 8–22x speedup over the classical DT on synthetic data, with only a marginal trade-off in accuracy. In contrast, BDTKS does not demonstrate advantages in either training speed or accuracy. In the random forest and boosting settings, \Cref{tab:synthetic_data_CGRF_rho05} and \Cref{tab:xgb_lgb_cgxgb_rho05} demonstrate that DICS substantially accelerates training with negligible impact on accuracy (within 0.02). Specifically, CGRF is 8–30x faster than RF, while FastC-GBM is 3.6–27x faster than LightGBM and 5–50x faster than XGBoost.

\begin{table*}[htbp]
\centering
\footnotesize
\setlength{\tabcolsep}{2.0pt}
\begin{tabular}{lccccccc}
\toprule
& \multicolumn{3}{c}{Train time (s)$\downarrow$}
& Speedup$\uparrow$
& \multicolumn{3}{c}{Test Acc$\uparrow$}
\\
\cmidrule(lr){2-4} \cmidrule(lr){5-5} \cmidrule(lr){6-8}
Dataset & DT & CGCT & BDTKS  & $t_{DT}/t_{CGCT}$ & DT & CGCT & BDTKS \\
\midrule
Helena         & 1.30  & \best{0.12} & 470.25     & 10.83  & \best{0.28} & 0.26 & 0.16  \\
Spambase       & 0.02  & \best{0.01}  & 5.42       & 2.00   & \best{0.95} & 0.93 & 0.79  \\
Santander      & 17.99 & \best{0.83}  & 714.26    & 21.67  & \best{0.91} & 0.90 & 0.85 \\
CIFAR-10       & 42.63 & \best{3.33}  & 533.91    & 12.80  & \best{0.34} & 0.31 & 0.29  \\
MNIST          & 3.27  & \best{0.99}  & 124.42     & 3.30   & 0.83 & 0.79 & \textbf{0.91} \\
Fashion-MNIST  & 7.27  & \best{1.07}  & 176.54     & 6.80   & \best{0.80} & 0.76 & \best{0.80}  \\
\bottomrule
\end{tabular}
\caption{Accuracy and training time (in seconds) for DT, BDTKS, and CGCT in real datasets. The ratio $t_{\mathrm{DT}}/t_{\mathrm{CGCT}}$ quantifies the training time improvement of CGCT relative to DT.}
\label{tab:runtime_accuracy_dt_bdtks_cgdt}
\end{table*}

\subsection{Real Datasets}

\textbf{Real Datasets.} We use six real-world datasets spanning a variety of domains and scales, including Helena~\cite{openml2018helena}, Spambase~\cite{opml-spambase}, Santander~\cite{santander_customer_transaction_2019}, CIFAR-10~\cite{krizhevsky2009learning}, MNIST~\cite{lecun2010mnist}, and Fashion-MNIST~\cite{xiao2017fashion}. These datasets cover both tabular and image classification tasks, enabling a comprehensive evaluation of different model behaviors across heterogeneous data types. A summary of these datasets is given in \Cref{tab:summary_real} (Appendix).

Helena is a high-dimensional biological dataset from OpenML, commonly used for benchmarking feature selection and classification methods. Spambase is a classic UCI dataset for email spam detection based on word and character frequencies. The Santander dataset consists of anonymized customer transaction features for binary classification in a financial setting.
CIFAR-10 is a standard image recognition benchmark containing 10 object categories with natural images, while MNIST is a widely used handwritten digit dataset for digit classification. Fashion-MNIST is a more challenging variant of MNIST that contains grayscale images of clothing items, providing a modern replacement for digit recognition benchmarks.

\textbf{Results.}
The training time and accuracy of all methods across the three settings on real-world datasets are reported in \Cref{tab:runtime_accuracy_dt_bdtks_cgdt}--\Cref{tab:xgb_lgb_cgxgb_real}. Overall, tree-based algorithms enhanced with DICS achieve substantially improved training efficiency in these datasets. 
For the decision tree comparison in \Cref{tab:runtime_accuracy_dt_bdtks_cgdt}, CGCT is approximately 2-21x faster than DT, while its accuracy decreases slightly more than in the synthetic setting. In contrast, BDTKS is significantly slower and exhibits unstable performance across datasets.
From \Cref{tab:rf_vs_cgrf}, DICS accelerates the RF training by approximately 2.3-12.8x, while the accuracy drop remains small (up to $0.02$). For boosting methods comparison in \Cref{tab:xgb_lgb_cgxgb_real}, the proposed FastC-GBM is consistently faster than LightGBM (up to 9.5x) and XGBoost (up to 18.75x).
The only exception is the Santander dataset, where the performance degradation is likely due to implementation-level memory management issues on a large-scale dataset ($N = 200$k), rather than the algorithm itself. Finally, we observe that LightGBM performs unusually poorly on the Helena dataset, whereas FastC-GBM maintains stable performance, suggesting improved robustness compared to LightGBM, which is known to rely on a more aggressive splitting strategy that may affect stability in certain regimes. 

\begin{table*}[htbp]
\centering
\small
\setlength{\tabcolsep}{3.0pt}
\begin{tabular}{lccccc}
\toprule
& \multicolumn{2}{c}{Train time (s)$\downarrow$}
& Speedup$\uparrow$
& \multicolumn{2}{c}{Test Acc$\uparrow$} \\
\cmidrule(lr){2-3} \cmidrule(lr){4-4} \cmidrule(lr){5-6}
Dataset & RF & CGRF & $t_{\mathrm{RF}}/t_{\mathrm{CGRF}}$ & RF & CGRF \\
\midrule
Helena        & 1.43 (0.12) & \textbf{0.22 (0.02)} & 6.50 & \textbf{0.29} (0.00) & 0.28 (0.00) \\
Spambase      & 0.09 (0.00) & \best{0.03 (0.01)}& 3.00 & \textbf{0.95 (0.00)} &0.94 (0.00) \\
Santander     & 22.04 (0.46) & \textbf{3.12 (0.04)} & 7.06 & \textbf{0.90 (0.00)} & \textbf{0.90 (0.00)} \\
CIFAR-10      & 55.96 (2.47) & \textbf{4.37 (0.12)} & 12.80 & \textbf{0.40 (0.01)} & 0.38 (0.01) \\
MNIST & 5.24 (0.25) & \textbf{2.28 (0.02)} & 2.30 & \textbf{0.93 (0.00)} & 0.92 (0.00) \\
Fashion-MNIST & 10.67 (0.05) & \textbf{2.40 (0.03)} & 4.45 & \textbf{0.83 (0.00)} & 0.82 (0.00) \\
\bottomrule
\end{tabular}
\caption{Accuracy and training time (in seconds) for RF and CGRF in real datasets. The ratio $t_{\mathrm{RF}}/t_{\mathrm{CGRF}}$ quantifies the training time improvement of CGRF relative to RF. }
\label{tab:rf_vs_cgrf}
\end{table*}

\begin{table*}[htbp]
\centering
\setlength{\tabcolsep}{3.5pt}
\resizebox{\linewidth}{!}{
\begin{tabular}{lcccccccc}
\toprule
& \multicolumn{3}{c}{Train time (s)$\downarrow$}
& \multicolumn{2}{c}{Speedup$\uparrow$}
& \multicolumn{3}{c}{Test Acc$\uparrow$} \\
\cmidrule(lr){2-4} \cmidrule(lr){5-6} \cmidrule(lr){7-9}
Dataset & XGBoost & LightGBM & FastC-GBM & $\rho_1$ & $\rho_2$ & XGBoost & LightGBM & FastC-GBM \\
\midrule
Helena        & 13.01 (0.14) & 13.38 (0.15) & \best{5.48 (0.26)} & 2.37 & 2.44 & \best{0.35 (0.00)} & 0.23 (0.00) & 0.32 (0.00) \\
Spambase      & 0.20 (0.00)  & 0.16 (0.01)           & \best{0.06 (0.00)} & 3.34 & 2.67 & 0.95 (0.00) & \best{0.95 (0.00)} & 0.94 (0.00) \\
Santander     & \best{0.91 (0.03)} & 1.09 (0.02) & 1.12 (0.01) & 0.81 & 0.97 & \best{0.90 (0.00)} & \best{0.90 (0.00)} & \best{0.90 (0.00)} \\
CIFAR-10      & 52.50 (0.86) & 26.57 (0.29) & \best{2.80 (0.04)} & 18.75 & 9.49 & 0.45 (0.00) & \best{0.48 (0.00)} & 0.46 (0.00) \\
MNIST         & 29.42 (0.07) & 4.33 (0.01) & \best{2.55 (0.04)} & 11.54 & 1.70 & 0.94 (0.00) & \best{0.95 (0.00)} & \best{0.95 (0.00)} \\
Fashion-MNIST & 22.24 (0.11) & 10.93 (0.10) & \best{2.85 (0.04)} & 7.80 & 3.83 & 0.86 (0.00) & \best{0.87 (0.00)} & 0.86 (0.00) \\
\bottomrule
\end{tabular}
}
\caption{Accuracy and training time (in seconds) for XGBoost, LightGBM and FastC-GBM in real datasets. The ratios $\rho_1=t_{\mathrm{XGBoost}}/t_{\mathrm{FastC-GBM}}$ and $\rho_2=t_{\mathrm{LightGBM}}/t_{\mathrm{FastC-GBM}}$ quantify the training time improvement of FastC-GBM relative to XGBoost and LightGBM, respectively. }
\label{tab:xgb_lgb_cgxgb_real}
\end{table*}

\section{Conclusion and Future Directions}\label{sec:conclusion}

We propose DICS, a novel approach for generating data-informed candidate splits in tree-based methods, resulting in a substantially reduced search space. Our theoretical analysis shows that, asymptotically, DICS yields candidate splits of comparable quality to those of classical decision trees, and therefore achieves similar predictive performance. Empirical results further demonstrate that DICS maintains comparable accuracy while significantly improving computational efficiency.
Currently, the proposed approach focuses on accelerating classification tasks, which serves as the main \textit{limitation}. An important direction for future work is to extend this framework by incorporating data-informed priors to similarly improve efficiency in regression settings.

\medskip

{
\small
\bibliographystyle{unsrt}
\bibliography{references}

@ARTICLE{Entropy,
  author={Wang, Qing Ren and Suen, Ching Y.},
  journal={IEEE Transactions on Pattern Analysis and Machine Intelligence}, 
  title={Analysis and Design of a Decision Tree Based on Entropy Reduction and Its Application to Large Character Set Recognition}, 
  year={1984},
  volume={PAMI-6},
  number={4},
  pages={406-417},
  doi={10.1109/TPAMI.1984.4767546}}

@inproceedings{mehta1996sliq,
  title={{SLIQ}: A fast scalable classifier for data mining},
  author={Mehta, Manish and Agrawal, Rakesh and Rissanen, Jorma},
  booktitle={International conference on extending database technology},
  pages={18--32},
  year={1996},
  organization={Springer}
}

@article{loh1997split,
  title={Split selection methods for classification trees},
  author={Loh, Wei-Yin and Shih, Yu-Shan},
  journal={Statistica sinica},
  pages={815--840},
  year={1997},
  publisher={JSTOR}
}

@inproceedings{gehrke1998rainforest,
  title={Rainforest-a framework for fast decision tree construction of large datasets},
  author={Gehrke, Johannes and Ramakrishnan, Raghu and Ganti, Venkatesh},
  booktitle={VLDB},
  volume={98},
  pages={416--427},
  year={1998}
}

@inproceedings{chickering2001efficient,
  title={Efficient determination of dynamic split points in a decision tree},
  author={Chickering, David Maxwell and Meek, Christopher and Rounthwaite, Robert},
  booktitle={Proceedings 2001 IEEE international conference on data mining},
  pages={91--98},
  year={2001},
  organization={IEEE}
}

@article{breiman2001random,
  title={Random forests},
  author={Breiman, Leo},
  journal={Machine learning},
  volume={45},
  number={1},
  pages={5--32},
  year={2001},
  publisher={Springer}
}

@article{geurts2006extratrees,
  author  = {Geurts, Pierre and Ernst, Damien and Wehenkel, Louis},
  title   = {Extremely Randomized Trees},
  journal = {Machine Learning},
  volume  = {63},
  number  = {1},
  pages   = {3--42},
  year    = {2006},
  doi     = {10.1007/s10994-006-6226-1}
}

@article{ke2017lightgbm,
title={{LightGBM}: A highly efficient gradient boosting decision tree},
  author={Ke, Guolin and Meng, Qi and Finley, Thomas and Wang, Taifeng and Chen, Wei and Ma, Weidong and Ye, Qiwei and Liu, Tie-Yan},
  journal={Advances in neural information processing systems},
  volume={30},
  year={2017}
}

@inproceedings{chen2016xgboost,
  title={{XGBoost}: {A} scalable tree boosting system},
  author={Chen, Tianqi and Guestrin, Carlos},
  booktitle={Proceedings of the 22nd acm sigkdd international conference on knowledge discovery and data mining},
  pages={785--794},
  year={2016}
}

@inproceedings{macqueen1967multivariate,
  title={Some methods of classification and analysis of multivariate observations},
  author={McQueen, James B.},
  booktitle={Proc. of 5th Berkeley Symposium on Math. Stat. and Prob.},
  pages={281--297},
  year={1967}
}

@inproceedings{barros2011clustering,
  title={A clustering-based decision tree induction algorithm},
  author={Barros, Rodrigo C and de Carvalho, Andr{\'e} CPL R and Basgalupp, Marcio R and Quiles, Marcos G},
  booktitle={2011 11th International Conference on Intelligent Systems Design and Applications},
  pages={543--550},
  year={2011},
  organization={IEEE}
}

@article{wang2020linear,
  title={A linear multivariate binary decision tree classifier based on {K-means} splitting},
  author={Wang, Fei and Wang, Quan and Nie, Feiping and Li, Zhongheng and Yu, Weizhong and Ren, Fuji},
  journal={Pattern Recognition},
  volume={107},
  pages={107521},
  year={2020},
  publisher={Elsevier}
}

@inproceedings{lin2020generalized,
  title={Generalized and scalable optimal sparse decision trees},
  author={Lin, Jimmy and Zhong, Chudi and Hu, Diane and Rudin, Cynthia and Seltzer, Margo},
  booktitle={Proceedings of the 37th International Conference on Machine Learning (ICML'20)},
  volume={119},
  pages={6150--6160},
  year={2020},
  organization={PMLR}
}

@inproceedings{mctavish2022fast,
  title={Fast sparse decision tree optimization via reference ensembles},
  author={McTavish, Hayden and Zhong, Chudi and Achermann, Reto and Karimalis, Ilias and Chen, Jacques and Rudin, Cynthia and Seltzer, Margo},
  booktitle={Proceedings of the AAAI Conference on Artificial Intelligence},
  volume={36},
  number={9},
  pages={9604--9613},
  year={2022}
}

@inproceedings{hu2019osdt,
  author    = {Hu, Xiyang and Rudin, Cynthia and Seltzer, Margo},
  title     = {Optimal Sparse Decision Trees},
  booktitle = {Advances in Neural Information Processing Systems},
  volume    = {32},
  pages     = {7265--7273},
  year      = {2019},
  publisher = {Curran Associates, Inc.},
  url       = {https://proceedings.neurips.cc/paper/2019/hash/ac52c626afc10d4075708ac4c778ddfc-Abstract.html}
}

@inproceedings{zhang2023optimal,
  title={Optimal Sparse Regression Trees},
  author={Zhang, Rui and Xin, Rui and Seltzer, Margo and Rudin, Cynthia},
  booktitle={Proceedings of the AAAI Conference on Artificial Intelligence},
  volume={37},
  number={9},
  pages={11172--11180},
  year={2023}
}

@inproceedings{aglin2020learning,
  author    = {Ga{\"{e}}l Aglin and Siegfried Nijssen and Pierre Schaus},
  title     = {Learning Optimal Decision Trees Using Caching Branch-and-Bound Search},
  booktitle = {Proceedings of the AAAI Conference on Artificial Intelligence},
  volume    = {34},
  number    = {04},
  pages     = {3146--3153},
  year      = {2020},
  doi       = {10.1609/aaai.v34i04.5712}
}

@misc{openml2018helena,
  author = {OpenML},
  title = {{HELENA} Dataset},
  year = {2018},
  howpublished = {\url{https://www.openml.org/d/41169}}
}

@article{krizhevsky2009learning,
  title={Learning Multiple Layers of Features from Tiny Images},
  author={Krizhevsky, Alex},
  year={2009},
  journal={University of Toronto}
}

@article{lecun2010mnist,
  title   = {Gradient-based learning applied to document recognition},
  author  = {LeCun, Yann and Bottou, L{\'e}on and Bengio, Yoshua and Haffner, Patrick},
  journal = {Proceedings of the IEEE},
  volume  = {86},
  number  = {11},
  pages   = {2278--2324},
  year    = {1998},
  doi     = {10.1109/5.726791}
}

@article{xiao2017fashion,
  title={{Fashion-MNIST}: a novel image dataset for benchmarking machine learning algorithms},
  author={Xiao, Han and Rasul, Kashif and Vollgraf, Roland},
  journal={arXiv preprint arXiv:1708.07747},
  year={2017}
}

@misc{santander_customer_transaction_2019,
  author = {Santander},
  title = {Santander Customer Transaction Prediction},
  year = {2019},
  publisher = {Kaggle},
  journal = {Kaggle},
  howpublished = {\url{https://www.kaggle.com}}
}

@misc{opml-spambase,
  author = {Mark Hopkins and Erik Reeber and George Forman and Jaap Suermondt},
  title = {Spambase},
  year = {1999},
  howpublished = {UCI Machine Learning Repository},
  note = {OpenML dataset ID 44},
  url = {https://www.openml.org/d/44}
}

@book{duda2006pattern,
  title     = {Pattern Classification},
  author    = {Duda, Richard O. and Hart, Peter E. and Stork, David G.},
  edition   = {2},
  year      = {2001},
  publisher = {Wiley-Interscience},
  address   = {New York},
  isbn      = {9780471056690}
}

@book{bishop2006pattern,
  title={Pattern recognition and machine learning},
  author={Bishop, Christopher M. and Nasrabadi, Nasser M.},
  volume={4},
  year={2006},
  publisher={Springer}
}

@article{cover1967nearest,
  title={Nearest neighbor pattern classification},
  author={Cover, Thomas and Hart, Peter},
  journal={IEEE transactions on information theory},
  volume={13},
  number={1},
  pages={21--27},
  year={1967},
  publisher={IEEE}
}

@article{laurent1976constructing,
  title={Constructing optimal binary decision trees is {NP}-complete},
  author={Laurent, Hyafil and Rivest, Ronald L.},
  journal={Information processing letters},
  volume={5},
  number={1},
  pages={15--17},
  year={1976}
}

@article{voronoi1908nouvelles,
  title={Nouvelles applications des param{\`e}tres continus {\`a} la th{\'e}orie des formes quadratiques. Deuxi{\`e}me m{\'e}moire. Recherches sur les parall{\'e}llo{\`e}dres primitifs.},
  author={Voronoi, Georges},
  journal={Journal f{\"u}r die reine und angewandte Mathematik (Crelles Journal)},
  volume={1908},
  number={134},
  pages={198--287},
  year={1908},
  publisher={De Gruyter Berlin, New York}
}

@article{li2007mcrank,
  title={{McRank}: Learning to rank using multiple classification and gradient boosting},
  author={Li, Ping and Wu, Qiang and Burges, Christopher},
  journal={Advances in neural information processing systems},
  volume={20},
  year={2007}
}

@book{breiman2017classification,
  title={Classification and regression trees},
  author={Breiman, Leo and Friedman, Jerome and Olshen, Richard A and Stone, Charles J},
  year={2017},
  publisher={Chapman and Hall/CRC},
  journal = {Cytometry},
  volume = {8},
  pages = {534-535},
  doi = {https://doi.org/10.1002/cyto.990080516},
  url = {https://onlinelibrary.wiley.com/doi/abs/10.1002/cyto.990080516}
}

@article{friedman2001greedy,
  title={Greedy function approximation: a gradient boosting machine},
  author={Friedman, Jerome H.},
  journal={Annals of statistics},
  pages={1189--1232},
  year={2001},
  publisher={JSTOR}
}

@book{gini1912variabilita,
  title={Variabilit{\`a} e mutabilit{\`a}: contributo allo studio delle distribuzioni e delle relazioni statistiche},
  author={Gini, Corrado},
  year={1912},
  publisher={Tipogr. di P. Cuppini}
}

@inproceedings{arthur2007k,
author = {Arthur, David and Vassilvitskii, Sergei},
title = {k-means++: the advantages of careful seeding},
year = {2007},
isbn = {9780898716245},
publisher = {Society for Industrial and Applied Mathematics},
address = {USA},
booktitle = {Proceedings of the Eighteenth Annual ACM-SIAM Symposium on Discrete Algorithms},
pages = {1027–1035},
numpages = {9},
location = {New Orleans, Louisiana},
series = {SODA '07}
}

@inproceedings{sculley2010web,
  title={Web-scale k-means clustering},
  author={Sculley, David},
  booktitle={Proceedings of the 19th international conference on World wide web},
  pages={1177--1178},
  year={2010}
}
}

\appendix

\section{Technical appendices and supplementary material}
In this Appendix, we provide the full details of the tree-based algorithms incorporating the DICS algorithm in \Cref{sec:alg_details}. The proof of the main theorem is presented in \Cref{sec:proof}. We further include a parameter sensitivity analysis in \Cref{sec:app_sensitivity}, empirical validation of DICS Split Quality in \Cref{sec:app_split_quality} as well as additional experimental results on synthetic data in \Cref{sec:app_sim}.

\subsection{Algorithmic Details}\label{sec:alg_details}

\begin{algorithm}[h]
\caption{Cluster-Guided Classification Tree (CGCT)}
\label{alg:Tree_build}
\begin{algorithmic}[1]

\REQUIRE Feature matrix $\Xbf \in \mathbb{R}^{P \times N}$, labels $\ybm \in [K]^N$, split-candidate dictionary $\mathcal{S}$ (Alg.~\ref{alg:split_candidates}), maximum depth $D$, number of sampled candidates $k$.
\ENSURE Decision tree $T$
\vspace{0.5em}
\STATE \textbf{Function} \textsc{BuildTree}$(\Xbf, \ybm, d)$
\IF{$d \ge D$ \textbf{or} $|\ybm| \le 2$ \textbf{or} $\ybm$ is pure}
    \STATE \textbf{return} leaf node with majority class in $\ybm$
\ENDIF

\STATE \textbf{(Split selection)}
\STATE Sample a subset $\mathcal{C} \subset \mathcal{S}$ with $|\mathcal{C}| = k$ (without replacement)
\STATE $(f^*, \tau^*) \gets \arg\min_{(f,\tau)\in\mathcal{C}} Q(f,\tau)$
\STATE $\Scal\gets\Scal/\{(f^*,\tau^*)\}$

\STATE \textbf{(Partition)}
\STATE $\Xbf_L, \ybm_L \gets \{(\xbm_i,y_i) : \Xbf_{f^*,i} < \tau^*\}$
\STATE $\Xbf_R, \ybm_R \gets \{(\xbm_i,y_i) : \Xbf_{f^*,i} \geq \tau^*\}$

\STATE \textbf{(Recursion)}
\STATE $T_L \gets \textsc{BuildTree}(\Xbf_L, \ybm_L, d+1)$
\STATE $T_R \gets \textsc{BuildTree}(\Xbf_R, \ybm_R, d+1)$

\STATE \textbf{return} node $(f^*, \tau^*, T_L, T_R)$

\vspace{0.5em}
\STATE \textbf{return} $T \gets \textsc{BuildTree}(\Xbf, \ybm, 0)$

\end{algorithmic}
\end{algorithm}

In this section, we present the detailed algorithms for tree-based methods incorporating DICS. In particular, we introduce the \emph{Cluster-Guided Classification Tree} (CGCT), the \emph{Cluster-Guided Random Forest} (CGRF), and the cluster-guided boosting method, referred to as the \textit{Fast Classification Gradient Boosting Machine} (FastC-GBM).

\subsubsection{Cluster-Guided Classification Tree (CGCT)}\label{sec:CGCT}

DICS precalculates candidate splits $\Scal$, which yields two main advantages: (a) the number of candidate splits is reduced to $mP$, significantly lowering the computational cost; (b) $\Scal$ focuses on representative thresholds that capture the primary variations in the data. As a result, it is sufficient to explore only a subset of $\Scal$ to obtain high-quality splits, avoiding exhaustive evaluation. We refer to the resulting method as the Cluster-Guided Classification Tree (CGCT), summarized in \Cref{alg:Tree_build}.

\subsubsection{Cluster-guided random forest (CGRF)}\label{sec:CGRF}

As an ensemble method, RF inherits the computational challenges of DT, since building each tree remains expensive when $N$ is large, even though each split only considers a random subset of features (e.g., $\sqrt{P}$ for classification). To further accelerate training, the DICS algorithm can be incorporated by replacing the standard decision tree construction with the Cluster-Guided Classification Tree (CGCT) for each tree in the ensemble. 
We note that the split-candidate dictionary $\Scal$ obtained from bootstrap samples typically varies only slightly, as the perturbation induced by resampling has limited impact on the underlying $K$-means clustering. Therefore, it is unnecessary to recompute $\Scal$ for every CGCT, which helps reduce redundant computations.
We refer to this method as the Cluster-Guided Random Forest (CGRF), which is summarized in \Cref{alg:cgrf}.

\begin{algorithm}[h]
\caption{Cluster-Guided Random Forest (CGRF)}
\label{alg:cgrf}
\begin{algorithmic}[1]

\REQUIRE Feature matrix $\Xbf \in \mathbb{R}^{P \times N}$, labels $\ybm \in [K]^N$, number of trees $B$, number of clusters $m$, maximum depth $D$, number of sampled candidates $k$, feature subset size $\phi_p=\lfloor\sqrt{P}\rfloor$
\ENSURE Forest $\mathcal{F} = \{T_b\}_{b=1}^B$
\vspace{0.5em}
\STATE \textbf{Initialize:} $\Scal \gets \mathrm{DICS}(\Xbf, m)$

\FOR{$b = 1$ to $B$}
    \STATE Draw a bootstrap sample $(\Xbf^{(b)}, \ybm^{(b)})$ of size $N$
    \STATE Sample a feature subset $\mathcal{J}_b \subset [P]$ with $|\mathcal{J}_b| = \phi_p$
    \STATE Construct restricted dictionary $\Scal_b \gets \{T_f \in \Scal : f \in \mathcal{J}_b\}$
    \STATE Grow tree $T_b \gets \mathrm{CGCT}(\Xbf^{(b)}, \ybm^{(b)}, \Scal_b, D, k)$
\ENDFOR

\STATE \textbf{return} $\mathcal{F} = \{T_b\}_{b=1}^B$

\end{algorithmic}
\end{algorithm}

\begin{algorithm}[h]
\caption{\textbf{Fast} \textbf{C}lassification \textbf{G}radient \textbf{B}oosting \textbf{M}achine (FastC-GBM)}
\label{alg:cgxgb_goss}
\begin{algorithmic}[1]
\REQUIRE Feature matrix $\mathbf{X}\in\mathbb{R}^{P\times N}$, labels $y\in[K]^N$, split-candidate dictionary $\mathcal{S}$ (Alg.~\ref{alg:split_candidates}), number of boosting rounds $T$, maximum depth $D$, number of sampled candidates $k$, learning rate $\eta$, GOSS large-gradient size $N_a=\lfloor a\cdot N\rfloor$, GOSS small-gradient sampling size $N_a=\lfloor b\cdot N\rfloor$, column (feature) subsampling size $N_f=\lfloor c\cdot P\rfloor$

\ENSURE Additive multiclass boosted model $F_T(\xbm)$.
\vspace{0.5em}
\STATE \textbf{Initialize:} $F_{0,c}(\bm{x}_i)=\log(\hat{\pi}_c), \forall c\in[K]$, where $\hat{\pi}_c$ is the empirical class proportion.

\FOR{$t=1$ to $T$}

\STATE \textbf{(GOSS step)}
\STATE Update $g_{ic}\gets p_{ic}^{(t-1)}-\Ibb(y_i=c),\quad\forall i\in[N],\forall c\in[K]$.
\STATE Update $h_{ic}\gets p_{ic}^{(t-1)}(1-p_{ic}^{(t-1)}),\quad\forall i\in[N],\forall c\in[K]$.
\STATE $\Acal\gets \{i\in[N]\mid i\in \mbox{Top-}(N_a)\{\|g_1\|_2,\ldots,\|g_N\|_2\}\}$.
\STATE Sample a subset $\Bcal\subset [N]\backslash \Acal$ with $|\Bcal|=N_b$. 

\STATE Update $g_{ic}\gets \frac{1-a}{b}g_{ic},\quad \forall i\in\Bcal,\forall c\in[K]$
\STATE Update $h_{ic}\gets \frac{1-a}{b}h_{ic},\quad \forall i\in\Bcal,\forall c\in[K]$

\STATE \textbf{(Split selection)}
\STATE Sample a subset $\Fcal_{\mathrm{sub}}\subset[P]$ with $|\Fcal_{\mathrm{sub}}|=N_f$
\STATE Sample a subset $\mathcal{C} \subset \mathcal{S}'=\{(f,\tau)\in\Scal:f\in\Fcal_{\mathrm{sub}}\}$ with $|\mathcal{C}| = k$ (without replacement)
\STATE $(f^*, \tau^*) \gets \arg\min_{(f,\tau)\in\mathcal{C}} Q(f,\tau)$ where $Q(f,\tau)$ is evaluated over $\Acal\cup\Bcal$ using \eqref{eq:boosting_quality}.

\STATE $\Scal\gets\Scal/\{(f^*,\tau^*)\}$
\STATE Grow one leaf-wise tree $f_{t}$ from the split $(f^*,\tau^*)$.
\STATE Update model $F_{t}(\xbm_i)\gets F_{t-1}(\xbm_i)+\eta f_{t}(\xbm_i)$.

\ENDFOR

\RETURN $F_T(\xbm)$.
\end{algorithmic}
\end{algorithm}

\subsubsection{Fast classification GBM (FastC-GBM)}\label{sec:FastC-GBM}

For classification tasks, let $f_{t,c}(\xbm)=\sum_{j=1}^Mw^t_{j,c}\Ibb(\xbm\in R_{j}^t),\forall c=1,\ldots,K,$ denote the tree model at iteration $t$ for each class $c$; the resulting class scores are then transformed into probabilities via the softmax function. Each model $f_{t}=[f_{t,1},\ldots,f_{t,c},\ldots,f_{t,K}]^\top\in\Rbb^K$ outputs a $K$-dimensional vector. 
Gradient tree boosting system then proceeds by iteratively solving, at iteration $t$, the following optimization problem based on the current prediction $\hat{\ybm}_i^{(t-1)}$:
\begin{align}\label{eq:boosting_obj}
    \Lcal^{(t)}& =\sum_{i=1}^N\ell(\ybm_i,\hat{\ybm}_i^{(t-1)}+f_t(\xbm_i))+\Omega(f_t),
\end{align}
where the loss function is the multiclass cross-entropy 
\begin{align*}
    \ell(\ybm_i,\hat{\ybm}_i)=-\sum_{c=1}^Ky_{ic}\log p_{ic}, \quad  p_{ic}=\frac{\exp(\hat{y}_{ic})}{\sum_{l=1}^K\exp(\hat{y}_{il})}
\end{align*}
and the regularization term is given by $\Omega(f_t)=\gamma M+\frac{1}{2}\lambda \sum_{c=1}^K\|w^t_{:,c}\|^2$. Taking a second-order approximation of \eqref{eq:boosting_obj} and expanding the regularization term yields
\begin{align}\label{eq:boosting_prob}
    \tilde\Lcal^{(t)} & = \sum_{c=1}^K\sum_{j=1}^M[(\sum_{i\in\Ical_j}g_{ic})w_{j,c}+\frac{1}{2}(\lambda+\sum_{i\in\Ical_j}h_{ic})w_{j,c}^2]+\gamma M,
\end{align}
where $\Ical_j=\{i:\xbm_i\in R^t_j\}$ denote the instance set of leaf $j$ and $g_{ic},h_{ic}$ are the first and second order gradient statistics on the loss function for each class $c$, whose formulas are given by
\begin{align*}
    g_{ic}=p_{ic}-y_{ic}, \quad h_{ic}=p_{ic}(1-p_{ic}), \quad \forall c=1,\ldots,K.
\end{align*}
The optimal leaf weights for each region $R_j$ and each class $c$, obtained by minimizing \eqref{eq:boosting_prob}, are
\begin{align}\label{eq:boosting_weights}
    w_{j,c}^*=-\frac{\sum_{i\in\Ical_j}g_{ic}}{\lambda+\sum_{i\in\Ical_j}h_{ic}}, \quad \forall c=1,\ldots,K.
\end{align}
Let $\mathcal{I}_L$ and $\mathcal{I}_R$ denote the instance sets of the left and right child nodes induced by a candidate split $(f,\tau)$, and let $\mathcal{I}=\mathcal{I}_L \cup \mathcal{I}_R$. Then the loss reduction is given by
\begin{align}\label{eq:boosting_quality}
    Q(f,\tau)=\frac{1}{2}\sum_{c=1}^K\left[ \frac{(\sum_{i\in\Ical_L}g_{ic})^2}{\lambda+\sum_{i\in\Ical_L}h_{ic}}+\frac{(\sum_{i\in\Ical_R}g_{ic})^2}{\lambda+\sum_{i\in\Ical_R}h_{ic}}-\frac{(\sum_{i\in\Ical}g_{ic})^2}{\lambda+\sum_{i\in\Ical}h_{ic}}\right]-\gamma .
\end{align}
which can be used as evaluating the quality of the candidate split $Q(f,\tau)$.

By using DICS and incorporating the Gradient-based One-Side Sampling (GOSS) technique~\cite{ke2017lightgbm} and column (feature) subsampling technique~\cite{chen2016xgboost,ke2017lightgbm}, our \textbf{Fast} \textbf{C}lassification \textbf{G}radient \textbf{B}oosting \textbf{M}achine (\textit{FastC-GBM}) is summarized in \Cref{alg:cgxgb_goss}. 

\subsection{Technical Proofs}\label{sec:proof}

\begin{proof}[Proof of Theorem~\ref{thm:main}]
This proof consists of two parts. In Part (i), we show that under the density regularity condition, $\delta=d_H(\Pcal_{f,\bar\tau_*^{(f)}},\Pcal_{f,\tilde\tau_*^{(f)}})=\Ocal_P(\sqrt{N})$. In Part (ii), we establish that the difference between the two optimal gain values is bounded.

\textit{\textbf{Part (i).}} Fix a feature $f$, and for notational simplicity write $\tau := \tau^{(f)}$.
We first establish the convergence rate of the empirical centroids $\hat{\mu}_c$ to the population means $\mu_c$. Assume the feature $f$ is bounded in $[a,b]$. By Hoeffding's inequality, for each class $c$ and any $\varepsilon > 0$,
\[
\mathbb{P}\big(|\hat{\mu}_c - \mu_c| \ge \varepsilon\big)
\le 2 \exp\left(-\frac{2 n_c \varepsilon^2}{(b-a)^2}\right),
\]
where $n_c$ is the number of samples in class $c$. Since $n_c \asymp N$, it follows that with probability at least $1-\gamma$,
\[
|\hat{\mu}_c - \mu_c|
\le (b-a)\sqrt{\frac{\log(2/\gamma)}{2 n_c}}
= \mathcal{O}\!\left(\frac{1}{\sqrt{N}}\right).
\]

In DICS, the candidate split is defined as $\tilde{\tau} = \zeta(\hat{\mu}_{c_1}, \hat{\mu}_{c_2})
= \frac{w_1 \hat{\mu}_{c_1} + w_2 \hat{\mu}_{c_2}}{w_1 + w_2}$,
where $w_1 = \hat{\sigma}_{c_2}$ and $w_2 = \hat{\sigma}_{c_1}$. The mapping $\zeta$ is linear and thus Lipschitz continuous with constant $L = \|\nabla \zeta\|_2 = \frac{\sqrt{w_1^2 + w_2^2}}{w_1 + w_2} \le 1$. Hence, $|\tilde{\tau} - \tau^*|
\le L \sum_c |\hat{\mu}_c - \mu_c|
= \mathcal{O}_P\!\left(\frac{1}{\sqrt{N}}\right)$.

Let $\tau^*$ and $\bar{\tau}^*$ denote the population and empirical minimizers of the Gini impurity function $Q(\tau)$ and $Q_N(\tau)$, respectively. For a fixed feature $f$, the split is determined by the indicator $\mathbb{I}\{X^{(f)} < \tau\}$, which defines a step function with a discontinuity at $\tau$. Hence, the empirical objective is piecewise constant and its maximizer lies on a grid of order statistics of size $1/N$. Standard results from change-point estimation imply $|\tau^* - \bar{\tau}^*| = \mathcal{O}_P\!\left(\frac{1}{N}\right)$. 

Therefore, $|\tilde{\tau} - \bar{\tau}^*|
\le |\tilde{\tau} - \tau^*| + |\tau^* - \bar{\tau}^*|
= \mathcal{O}_P\!\left(\frac{1}{\sqrt{N}}\right)$, which implies that $|\tilde{\tau}^* - \bar{\tau}^*|=\mathcal{O}_P\!\left(\frac{1}{\sqrt{N}}\right)$.  
Define the Hamming distance
\[
\delta = \sum_{i=1}^N \mathbb{I}\left\{X_i^{(f)} \in
\left[\min(\bar{\tau}^*, \tilde{\tau}^*), \max(\bar{\tau}^*, \tilde{\tau}^*)\right]\right\}.
\]
Assume the marginal density $p(x^{(f)})$ is bounded by $M$ in a neighborhood of $\tau^*$. Then
\[
\mathbb{E}[\delta]
= N \int_{\min(\bar{\tau}^*, \tilde{\tau}^*)}^{\max(\bar{\tau}^*, \tilde{\tau}^*)}
p(x^{(f)})\,dx
\le N M |\tilde{\tau}^* - \bar{\tau}^*|.
\]
Using $|\tilde{\tau}^* - \bar{\tau}^*| = \mathcal{O}_P(N^{-1/2})$, we obtain $\mathbb{E}[\delta] = \mathcal{O}(\sqrt{N})$. By Markov's inequality, for any $\varepsilon > 0$, $\mathbb{P}(\delta > \varepsilon \sqrt{N})
\le \frac{\mathbb{E}[\delta]}{\varepsilon \sqrt{N}} \to 0,$
as $N \to \infty$. Hence, $\delta = \mathcal{O}_P(\sqrt{N})$, which completes the proof.

\textbf{\textit{Part (ii). }}
    For notational simplicity, we write $\mathcal{P} = (\mathcal{I}_L, \mathcal{I}_R)$ to denote the partition $\mathcal{P}_{f,\bar{\tau}_*^{(f)}}$, and $\mathcal{P}' = (\mathcal{I}'_L, \mathcal{I}'_R)$ to denote the partition $\mathcal{P}_{f,\tilde{\tau}_*^{(f)}}$. Then $\Pcal$ has sizes $n_L = |\mathcal{I}_L|$ and $n_R = |\mathcal{I}_R|$. The class counts within each partition are defined as $n_{L,c} = |\{ i \in \mathcal{I}_L : y_i = c \}|$ and $n_{R,c} = |\{ i \in \mathcal{I}_R : y_i = c \}|$ for each class $c \in [K]$. 

    If $d_H(\mathcal{P}, \mathcal{P}') = \delta$, then there are $\delta$ samples are reassigned, either from $\mathcal{I}_L$ to $\mathcal{I}_R$ or vice versa. More precisely, let $\delta_1$ denote the number of samples moving from $\mathcal{I}_L$ to $\mathcal{I}_R$ (referred to as \textit{left-to-right}), and $\delta_2$ the number moving from $\mathcal{I}_R$ to $\mathcal{I}_L$ (referred to as \textit{right-to-left}), so that $\delta_1 + \delta_2 = \delta$. Accordingly, the subset sizes become $n_L' = n_L - \delta_1 + \delta_2$ and $n_R' = n_R + \delta_1 - \delta_2$.

    We next examine the class counts in the left and right subsets. For each left-to-right move, there exists a class $c$ such that $n_{L,c} \to n_{L,c} - 1$ and $n_{R,c} \to n_{R,c} + 1$. Similarly, for each right-to-left move, there exists a class $c$ such that $n_{L,c} \to n_{L,c} + 1$ and $n_{R,c} \to n_{R,c} - 1$.
    
    For a set $\mathcal{I}$ with $|\mathcal{I}| = n$, we consider the Gini index in the form $G(\mathcal{I}) = 1 - \sum_{c=1}^K \frac{n_c^2}{n^2}$, where $n_c$ denotes the number of samples in $\mathcal{I}$ that belong to class $c$. Then the objective can be rewritten as 
    \begin{align*}
        Q(\Pcal) = \frac{n_L}{N}G(\Ical_L)+\frac{n_R}{N}G(\Ical_R)=1-\frac{1}{N}\left( \frac{\sum_{c=1}^Kn_{L,c}^2}{n_L}+\frac{\sum_{c=1}^Kn_{R,c}^2}{n_R} \right)
    \end{align*}
    Hence the difference of two gains of $\Pcal$ and $\Pcal'$ can be found by
    \begin{align}\label{eq:diff_Q}
        Q(\Pcal')-Q(\Pcal) = \frac{1}{N}\left( \frac{\sum_{c=1}^K(n'_{L,c})^2}{n'_L}- \frac{\sum_{c=1}^Kn_{L,c}^2}{n_L}+\frac{\sum_{c=1}^K(n'_{R,c})^2}{n'_R}-\frac{\sum_{c=1}^Kn_{R,c}^2}{n_R} \right)
    \end{align}
    Let $S_L = \sum_{c=1}^K n_{L,c}^2$ and $S_R = \sum_{c=1}^K n_{R,c}^2$, and define $S_L', S_R'$ analogously. Denote: $I_1=\frac{S_L'}{n'_L}-\frac{S_L}{n_L}, I_2=\frac{S_R'}{n'_R}-\frac{S_R}{n_R}$.
    We first bound $I_1$. Write
    \begin{align*}
        I_1=\frac{S_L'}{n'_L}-\frac{S_L}{n_L} = \frac{S_L'-S_L}{n_L}+S_L'\left( \frac{1}{n_L'}-\frac{1}{n_L} \right).
    \end{align*}

    Each moved sample changes exactly one class count by $\pm1$, so that $(n_{L,c}')^2-n_{L,c}^2=\pm 2n_{L,c}+1$, which contributes at most $|(n_{L,c}')^2-n_{L,c}^2|\leq 2n_{L}+1<3n_L$. So for $\delta$ samples are moved, so that 
    \begin{align}\label{eq:s_l_upper}
        |S_L'-S_L|=\left|\sum_{c=1}^K(n_{L,c}')^2-\sum_{c=1}^Kn_{L,c}^2 \right|\leq \sum_{c=1}^K|(n_{L,c}')^2-n_{L,c}^2|<3\delta n_L 
    \end{align}
    Moreover, one can easily show that $S_L' \le n_L'^2$. Combining the inequalities $S_L' \le n_L'^2$ and $|n_L'-n_L|\le \delta$ into \eqref{eq:s_l_upper} yields
    \begin{align*}
        |I_1| \le \frac{|S_L'-S_L|}{n_L}+\frac{|n_L'-n_L|S_L'}{n_Ln'_L} <3\delta+\delta\frac{n'_L}{n_L} = (4+\frac{\delta}{n_L})\delta.
    \end{align*}
    Similarly, we can show that $|I_2|\leq (4+\frac{\delta}{n_R})\delta$. Substituting the inequalities of $I_1$ and $I_2$ into \eqref{eq:diff_Q} gives
    \begin{align}\label{eq:diff_Q_ineq}
        |Q(\Pcal)-Q(\Pcal')|< (8+\frac{\delta}{n_L}+\frac{\delta}{n_R})\frac{\delta}{N}\leq (8+\frac{1}{\alpha(1-\alpha)}\cdot\frac{\delta}{N})\frac{\delta}{N},
    \end{align}
    where $\alpha=\min_f\{\frac{n_L^{(f)}}{N},\frac{n_R^{(f)}}{N}\}$.
    Finally, since the above bound holds uniformly over $f$, we have
    \begin{align*}
        \bar Q^* =\min_f Q(f,\bar \tau_*^{(f)}) &\leq \min_f Q(f,\tilde \tau_*^{(f)})+(8+\frac{1}{\alpha(1-\alpha)}\cdot\frac{\delta}{N})\frac{\delta}{N} \\
        & = \tilde Q^*+(8+\frac{1}{\alpha(1-\alpha)}\cdot\frac{\delta}{N})\frac{\delta}{N}
    \end{align*}
    and similarly,
    \[
    \tilde Q^*\le \bar Q^* + (8+\frac{1}{\alpha(1-\alpha)}\cdot\frac{\delta}{N})\frac{\delta}{N}.
    \]
    Here we use the fact that $\bar Q^*=\min_f Q(f,\bar \tau_*^{(f)})=\min_{(f,\tau)\in\Scal_{\mathrm{DT}}}Q(f,\tau)$ and $\tilde Q^*=\min_f Q(f,\tilde \tau_*^{(f)})=\min_{(f,\tau)\in\Scal_{\mathrm{DICS}}}Q(f,\tau)$.
    Combining the two inequalities above completes the proof.

\end{proof}

\subsection{Parameter sensitivity}\label{sec:app_sensitivity}

\begin{figure}[ht]
    \centering

    \begin{subfigure}[t]{0.48\textwidth}
        \centering
        \includegraphics[width=\linewidth]{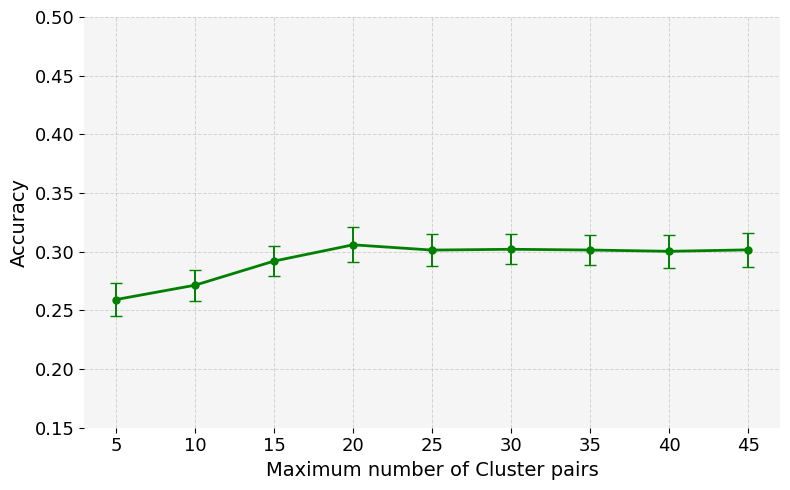}
        \caption{Test accuracy vs.\ maximum number of cluster pairs ($m$) on CIFAR-10.}
        \label{fig:sensitivity_m}
    \end{subfigure}
    \hfill
    \begin{subfigure}[t]{0.48\textwidth}
        \centering
        \includegraphics[width=\linewidth]{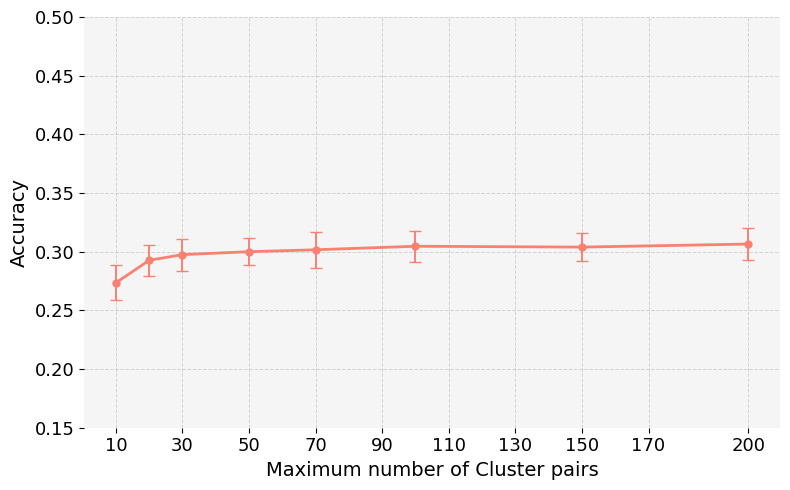}
        \caption{Test accuracy vs.\ maximum number of feature-threshold pairs ($k$) on CIFAR-10.}
        \label{fig:sensitivity_k}
    \end{subfigure}

    \vspace{0.4cm}

    \begin{subfigure}[t]{0.48\textwidth}
        \centering
        \includegraphics[width=\linewidth]{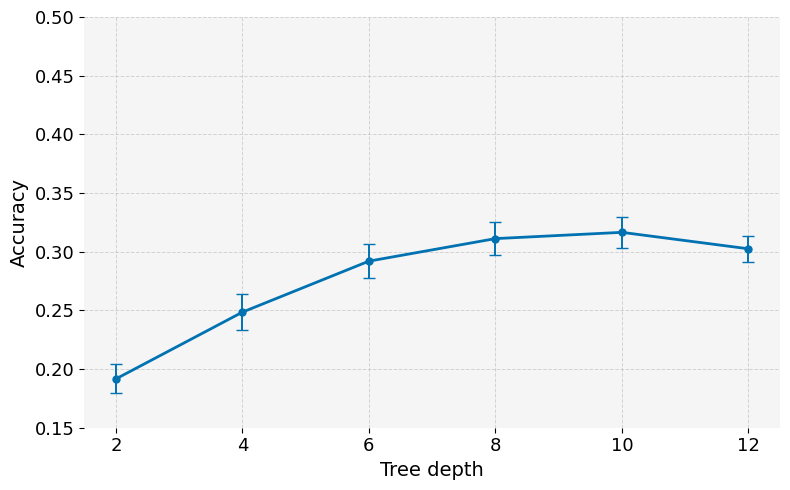}
        \caption{Test accuracy vs.\ maximum tree depth ($D$) on CIFAR-10.}
        \label{fig:sensitivity_depth}
    \end{subfigure}
    \hfill
    \begin{subfigure}[t]{0.48\textwidth}
        \centering
        \includegraphics[width=\linewidth]{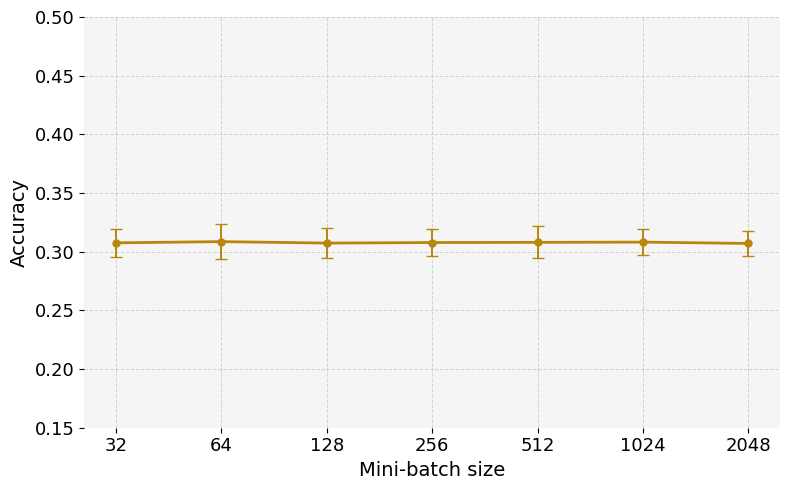}
        \caption{Test accuracy vs.\ mini-batch size ($b$) on CIFAR-10.}
        \label{fig:sensitivity_batch}
    \end{subfigure}

    \caption{
    Parameter sensitivity of CGCT on CIFAR-10 with respect to
    (a) the maximum number of cluster pairs ($m$),
    (b) the maximum number of feature-threshold pairs ($k$),
    (c) the maximum tree depth ($D$), and
    (d) the mini-batch size ($b$).
    Each point represents the mean test accuracy over 50 runs, and error bars denote $\pm 3$ standard deviations.
    }
    \label{fig:parameter_sensitivity}
\end{figure}

We analyze the sensitivity of CGCT to four key parameters: the maximum number of cluster pairs $m$ from \Cref{alg:split_candidates}, the length of the feature-threshold set $k$ where $|\mathcal{C}|=k$, the maximum tree depth $D$, and the mini-batch size $b$ used in mini-batch K-means. These parameters control the number of candidate splits considered during tree construction, the complexity of the resulting tree, and the computational cost of candidate generation.

\Cref{fig:parameter_sensitivity}(a) shows that increasing $m$ improves accuracy at first, but the performance quickly stabilizes after a moderate value. This suggests that a small number of cluster pairs is sufficient to capture useful split candidates, while additional pairs provide limited benefit.

\Cref{fig:parameter_sensitivity}(b) illustrates a similar trend for $k$. Initially, accuracy increases as more feature-threshold pairs are considered. However, beyond a specific threshold, the improvement diminishes significantly, and the curve gradually flattens. This suggests that once a sufficient number of informative candidates is included, adding more feature-threshold pairs yields only marginal benefit.

\Cref{fig:parameter_sensitivity}(c) shows the effect of the maximum tree depth $D$. Accuracy improves substantially as the depth increases from shallow trees and gradually stabilizes at larger depths. The highest accuracy is obtained around $D=8$–$10$, while increasing the depth further provides no additional improvement and results in a slight decrease at $D=12$. This indicates that a moderate tree depth is sufficient to capture the useful structure in the candidate splits without requiring unnecessarily deep trees.

Finally, \Cref{fig:parameter_sensitivity}(d) shows that CGCT is highly stable with respect to the mini-batch size $b$. Across a wide range of mini-batch sizes, from $32$ to $2048$, the test accuracy remains nearly unchanged. This suggests that the candidate splits generated by DICS are not highly sensitive to the mini-batch size used in K-means, and relatively small mini-batches are sufficient to obtain stable predictive performance.

Overall, these results show that CGCT is not highly sensitive to large values of the aforementioned parameters. Moderate parameter choices are sufficient to
obtain stable accuracy while maintaining efficient split generation and tree construction.

\subsection{Empirical Validation of DICS Split Quality}
\label{sec:app_split_quality}

We further evaluate the split quality of DICS from both tree-level and
split-level perspectives. First, we independently grow exhaustive DT and
CGCT on CIFAR-10 with maximum depth $D=8$, allowing each method to select
its own feature--threshold pair throughout tree construction. Table~\ref{tab:cifar10_depth_gain}
reports the mean local gain, sample-weighted impurity reduction, and
cumulative gain across depth. Overall, CGCT closely follows the impurity
reduction achieved by exhaustive DT despite the two trees being grown
independently. At the final splitting depth, CGCT achieves a cumulative
gain of $0.123$, compared with $0.127$ for exhaustive DT, corresponding
to $97.18\%$ of the cumulative gain achieved by exhaustive split search.

We additionally examine the split-level behavior in Remark~1 using the
synthetic-data setting by varying the sample size from $N=1{,}000$ to
$N=20{,}000$. For each sample size, DICS and exhaustive DT independently
select their best root-level feature--threshold pair, and the results are
averaged over 40 independent runs. As shown in
Table~\ref{tab:split_scaling}, the mean gain gap $\Delta_G$ decreases from
$0.0039$ to $0.0011$ as $N$ increases, while
$\sqrt{N}\Delta_G$ remains of comparable magnitude across the evaluated
sample sizes. This behavior is consistent with the
$O_P(N^{-1/2})$ split-level rate in Remark~1 and provides additional
empirical evidence that the optimal DICS gain approaches that of exhaustive
DT as the sample size increases.

\begin{table*}[http]
\centering

\setlength{\tabcolsep}{6pt}
\renewcommand{\arraystretch}{1.08}

\begin{tabular}{c
                cc
                cc
                cc
                c}
\toprule

& \multicolumn{2}{c}{Mean Local Gain $\bar{g}_d$}
& \multicolumn{2}{c}{Weighted Gain $G_d^{(w)}$}
& \multicolumn{2}{c}{Cumulative Gain $C_d$}
& $R_d$ (\%) \\

\cmidrule(lr){2-3}
\cmidrule(lr){4-5}
\cmidrule(lr){6-7}

Depth
& DT & CGCT
& DT & CGCT
& DT & CGCT
& \\

\midrule

1
& 0.019 & 0.019
& 0.019 & 0.019
& 0.019 & 0.019
& 100.00 \\

2
& 0.013 & 0.012
& 0.012 & 0.012
& 0.031 & 0.031
& 99.46 \\

3
& 0.015 & 0.014
& 0.013 & 0.013
& 0.044 & 0.043
& 98.04 \\

4
& 0.015 & 0.015
& 0.013 & 0.013
& 0.057 & 0.057
& 99.37 \\

5
& 0.018 & 0.017
& 0.013 & 0.013
& 0.070 & 0.070
& 99.40 \\

6
& 0.023 & 0.021
& 0.015 & 0.014
& 0.085 & 0.084
& 98.58 \\

7
& 0.037 & 0.030
& 0.018 & 0.017
& 0.103 & 0.101
& 97.71 \\

8
& 0.061 & 0.051
& 0.023 & 0.022
& \textbf{0.127} & \textbf{0.123}
& \textbf{97.18} \\

\bottomrule
\end{tabular}

\caption{
Depth-wise impurity reduction of independently grown exhaustive DT and CGCT
on CIFAR-10 with maximum depth $D=8$, where Depth~1 denotes the root level.
For depth $d$, $\bar{g}_d =
|\mathcal{I}_d|^{-1}\sum_{j\in\mathcal{I}_d} g_j$ is the mean local Gini gain,
$G_d^{(w)} =
\sum_{j\in\mathcal{I}_d}(n_j/N)g_j$ is the sample-weighted impurity reduction,
$C_d = \sum_{r=1}^{d} G_r^{(w)}$ is the cumulative weighted gain, and
$R_d = C_d^{\mathrm{CGCT}}/C_d^{\mathrm{DT}}$ is the cumulative gain
retained by CGCT relative to exhaustive DT.
}
\label{tab:cifar10_depth_gain}

\end{table*}

\begin{table}[http]
\centering

\setlength{\tabcolsep}{10pt}
\renewcommand{\arraystretch}{1.08}

\begin{tabular}{rcc}
\toprule
$N$
& Gain Gap $\Delta_G$
& $\sqrt{N}\,\Delta_G$ \\
\midrule

1{,}000
& 0.0039
& 0.1227 \\

2{,}000
& 0.0025
& 0.1100 \\

5{,}000
& 0.0017
& 0.1183 \\

10{,}000
& 0.0012
& 0.1242 \\

20{,}000
& 0.0011
& 0.1492 \\

\bottomrule
\end{tabular}

\caption{
Split-level gain gap between DICS and exhaustive DT across increasing
sample sizes. Results are averaged over 40 independent runs.
}
\label{tab:split_scaling}

\end{table}

\subsection{Additional Experiments}\label{sec:app_sim}

This section presents additional experiments on synthetic data with $\rho = 0$, where the generated features are independent of each other, as well as descriptions of the real datasets used in this paper. Additional comparisons of various algorithms are reported in \Cref{tab:synthetic_data_CGCT_rho0}--\ref{tab:xgb_lgb_cgxgb_rho0}. A summary of the real datasets is provided in \Cref{tab:summary_real}.

\begin{table*}[http]
\centering
\footnotesize
\setlength{\tabcolsep}{1.7pt}
\begin{tabular}{cc cccc cccc}
\toprule
\multirow{2}{*}{$(N,P)$} & \multirow{2}{*}{Class}
& \multicolumn{3}{c}{Time (sec)$\downarrow$}
& \multicolumn{1}{c}{Speedup$\uparrow$}
& \multicolumn{3}{c}{ACC$\uparrow$} \\
\cmidrule(lr){3-5} \cmidrule(lr){6-6} \cmidrule(lr){7-9}
& & DT & BDTKS & CGCT
& $t_{\mathrm{DT}}/t_{\mathrm{CGCT}}$
& DT & BDTKS & CGCT \\
\midrule

\multirow{3}{*}{10000,1000}
& 3  & 2.59 (0.02)   & 31.84   & \best{0.18 (0.02)}
     & 14.39
     & \best{0.67 (0.00)} & 0.53   & 0.65 (0.00) \\
& 5  & 2.49 (0.00)   & 35.54 & \best{0.27 (0.02)}
     & 9.22
     & \best{0.50 (0.00)} & 0.31   & 0.49 (0.00) \\
& 10 & 2.64 (0.00) & 38.21 & \best{0.30 (0.02)}
     & 8.80
     & \best{0.46 (0.00)} & 0.30 & 0.44 (0.01) \\
\cmidrule(lr){1-9}

\multirow{3}{*}{5000,7000}
& 3  & 8.15 (0.09)   & 29.65  & \best{0.63 (0.03)}
     & 12.94
     & \best{0.69 (0.00)} & 0.62  & 0.68 (0.01) \\
& 5  & 8.56 (0.04)   & 29.83   & \best{0.92 (0.05)}
     & 9.30
     & \best{0.39 (0.00)} & 0.30  & 0.37 (0.01) \\
& 10 & 8.67 (0.02)  & 33.40  & \best{1.20 (0.05)}
     & 7.22
     & \best{0.34 (0.00)} & 0.24 & 0.31 (0.01) \\
\cmidrule(lr){1-9}

\multirow{3}{*}{20000,5000}
& 3  & 26.50 (0.33)    & 118.76 & \best{1.20 (0.10)}
     & 22.08
     & \best{0.47 (0.00)} & 0.45 & \best{0.47 (0.00)} \\
& 5  & 29.46 (0.04) & 125.70 & \best{1.69 (0.01)}
     & 17.43
     & \best{0.53 (0.00)} & 0.41 & 0.51 (0.00) \\
& 10 & 29.53 (0.04)  & 131.71  & \best{2.07 (0.07)}
     & 14.26
     & \best{0.50 (0.00)} & 0.34 & 0.48 (0.00) \\
\bottomrule
\end{tabular}
\caption{Accuracy and training time (in seconds) for DT, BDTKS, and CGCT in real datasets. The ratio $t_{\mathrm{DT}}/t_{\mathrm{CGCT}}$ quantifies the training time improvement of CGCT relative to DT. Here $\rho=0$.}
\label{tab:synthetic_data_CGCT_rho0}
\end{table*}

\begin{table*}[http]
\centering
\footnotesize
\begin{tabular}{cc cc c cc}
\toprule
\multirow{2}{*}{$(N,P)$} & \multirow{2}{*}{Class}
& \multicolumn{2}{c}{Time (sec)$\downarrow$}
& \multicolumn{1}{c}{Speedup$\uparrow$}
& \multicolumn{2}{c}{ACC$\uparrow$} \\
\cmidrule(lr){3-4} \cmidrule(lr){5-5} \cmidrule(lr){6-7}
& & RF & CGRF & $t_{\mathrm{RF}}/t_{\mathrm{CGRF}}$ & RF & CGRF \\
\midrule

\multirow{3}{*}{10000,1000}
& 3  & 4.16 (0.04)   & \best{0.34 (0.02)}  & 12.23  & \best{0.70 (0.00)}  & \best{0.70 (0.00)} \\
& 5  & 4.13 (0.05)   & \best{0.39 (0.00)}  & 10.59  & \best{0.54 (0.00)} & \best{0.54 (0.01)} \\
& 10 & 4.54 (0.00)   & \best{0.50 (0.02)}  & 9.08  & \best{0.47 (0.00)} & \best{0.47 (0.00)} \\
\cmidrule(lr){1-7}

\multirow{3}{*}{5000,7000}
& 3  & 13.67 (0.46)  & \best{0.86 (0.07)} & 15.89 & \best{0.73 (0.00)}  & \best{0.73 (0.01)} \\
& 5  & 14.13 (0.15)    & \best{1.22 (0.03)} & 11.58 & \best{0.44 (0.01)} & \best{0.44 (0.00)} \\
& 10 & 14.51 (0.16)   & \best{1.71 (0.07)}  & 8.48 & \best{0.36 (0.01)} & \best{0.36 (0.01)} \\
\cmidrule(lr){1-7}

\multirow{3}{*}{20000,5000}
& 3  & 45.35 (1.86) & \best{1.52 (0.05)}  & 29.83 & \best{0.51 (0.00)} & 0.50 (0.01) \\
& 5  & 57.20 (1.31)  & \best{1.71 (0.08)} & 33.45 & \best{0.55 (0.00)}  & \best{0.55 (0.00)} \\
& 10 & 54.83 (0.24)& \best{2.09 (0.04)} & 26.23 & \best{0.51 (0.00)}  & \best{0.51 (0.00)} \\
\bottomrule
\end{tabular}
\caption{Accuracy and training time (in seconds) for RF and CGRF in synthetic data. The ratio $t_{\mathrm{RF}}/t_{\mathrm{CGRF}}$ quantifies the training time improvement of CGRF relative to RF. Here $\rho=0$.}
\label{tab:rf_cgrf_synth_speedup_rho0}
\end{table*}

\begin{table*}[http]
\centering
\setlength{\tabcolsep}{4.0pt}
\resizebox{\linewidth}{!}{
\begin{tabular}{cc ccc cc ccc}
\toprule
\multirow{2}{*}{$(N,P)$} & \multirow{2}{*}{Class}
& \multicolumn{3}{c}{Time (sec)$\downarrow$}
& \multicolumn{2}{c}{Speedup$\uparrow$}
& \multicolumn{3}{c}{ACC$\uparrow$} \\
\cmidrule(lr){3-5} \cmidrule(lr){6-7} \cmidrule(lr){8-10}
& & XGBoost & LightGBM & CGXGB
& $\rho_1$
& $\rho_1$
& XGBoost & LightGBM & CGXGB \\
\midrule

\multirow{3}{*}{10000,1000}
& 3  & 1.43 (0.01)  & 0.97 (0.00) & \best{0.30 (0.01)} & 4.77  & 3.23  & \best{0.71 (0.01)} & \best{0.71 (0.01)} & 0.70 (0.01) \\
& 5  & 2.36 (0.00)  & 1.59 (0.00) & \best{0.37 (0.00)} & 6.38 & 4.30  & 0.54 (0.00) & \best{0.55 (0.01)} & \best{0.55 (0.01)} \\
& 10 & 4.50 (0.01) & 2.93 (0.01) & \best{0.44 (0.01)} & 10.23 & 6.66  & \best{0.48 (0.01)} & 0.47 (0.02) & 0.45 (0.01) \\
\cmidrule(lr){1-10}

\multirow{3}{*}{5000,7000}
& 3  & 6.73 (0.05) & 3.13 (0.06) & \best{0.24 (0.01)} & 28.04 & 13.04 & \best{0.75 (0.02)} & \best{0.75 (0.02)} & 0.74 (0.01) \\
& 5  & 10.81 (0.23) & 4.80 (0.15) & \best{0.29 (0.00)} & 37.28 & 16.55 & \best{0.41 (0.02)} & \best{0.41 (0.02)} & 0.38 (0.01) \\
& 10 & 18.16 (0.46) & 9.30 (0.59) & \best{0.37 (0.01)} & 49.08 & 25.13 & \best{0.35 (0.01)} & 0.34 (0.01) & 0.33 (0.01) \\
\cmidrule(lr){1-10}

\multirow{3}{*}{20000,5000}
& 3  & 9.40 (0.24)  & 5.73 (0.17) & \best{0.68 (0.00)} & 13.82 & 8.43  & 0.54 (0.01) & \best{0.55 (0.01)} & 0.52 (0.01) \\
& 5  & 15.06 (0.19)  & 8.13 (0.08) & \best{0.77 (0.00)} & 19.56 & 10.56 & \best{0.55 (0.01)} & \best{0.55 (0.01)} & 0.54 (0.01) \\
& 10 & 31.04 (0.72) & 15.10 (0.35) & \best{1.09 (0.03)} & 28.48 & 13.85 & \best{0.51 (0.01)} & \best{0.51 (0.01)} & 0.50 (0.01) \\
\bottomrule
\end{tabular}
}
\caption{Accuracy and training time (in seconds) for XGBoost, LightGBM and FastC-GBM in synthetic data. The ratios $\rho_1=t_{\mathrm{XGBoost}}/t_{\mathrm{FastC-GBM}}$ and $\rho_2=t_{\mathrm{LightGBM}}/t_{\mathrm{FastC-GBM}}$ quantify the training time improvement of FastC-GBM relative to XGBoost and LightGBM, respectively. Here $\rho=0$.}
\label{tab:xgb_lgb_cgxgb_rho0}
\end{table*}

\begin{table*}[http]
\centering
\rowcolors{2}{gray!6}{white}
\setlength{\tabcolsep}{10pt}
\renewcommand{\arraystretch}{1.2}

\begin{tabular}{lccc}
\toprule
\rowcolor{gray!15}
\textbf{Dataset} & $\mathbf{N}$ & $\mathbf{P}$ & $\mathbf{K}$ \\
\midrule
Helena          & 65{,}196  & 27   & 100 \\
Spambase        & 4{,}601   & 57   & 2   \\
Santander       & 200{,}000 & 200  & 2   \\
CIFAR-10        & 60{,}000  & 3{,}072 & 10  \\
MNIST           & 70{,}000  & 784  & 10  \\
Fashion-MNIST   & 70{,}000  & 784  & 10  \\
\bottomrule
\end{tabular}
\caption{Summary of datasets used in the experiments, including sample size ($N$), number of features ($P$), and number of classes ($K$).}
\label{tab:summary_real}
\end{table*}

\end{document}